\newcommand{\trace}{\operatorname{tr}}
\theoremstyle{plain}
\newtheorem{theorem}{Theorem}[section]
\theoremstyle{definition}
\newtheorem{definition}[theorem]{Definition}
\theoremstyle{remark}
\title{Stable Coresets via Posterior Sampling: Aligning Induced and Full Loss Landscapes}
\author{
  Wei-Kai Chang \\
  Purdue University \\
  \texttt{chang986@purdue.edu}
  \And
  Rajiv Khanna \\
  Purdue University \\
  \texttt{rajivak@purdue.edu}
}
\begin{document}
\maketitle

\vskip 0.3in




\begin{abstract}
As deep learning models continue to scale, the growing computational demands have amplified the need for effective coreset selection techniques. Coreset selection aims to accelerate training by identifying small, representative subsets of data that approximate the performance of the full dataset. Among various approaches, gradient-based methods stand out due to their strong theoretical underpinnings and practical benefits, particularly under limited data budgets. However, these methods face challenges such as naïve stochastic gradient descent (SGD) acting as a surprisingly strong baseline and the breakdown of representativeness due to loss curvature mismatches over time.

In this work, we propose a novel framework that addresses these limitations. First, we establish a connection between posterior sampling and loss landscapes, enabling robust coreset selection even in high-data-corruption scenarios. Second, we introduce a smoothed loss function based on posterior sampling onto the model weights, enhancing stability and generalization while maintaining computational efficiency. We also present a novel convergence analysis for our sampling-based coreset selection method. Finally, through extensive experiments, we demonstrate how our approach achieves faster training and enhanced generalization across diverse datasets than the current state of the art. (Code are available in: \url{https://github.com/changwk1001/stable-coreset.git})
\end{abstract}

\section{Introduction}

\begin{figure}[t]
    \centering
    \begin{minipage}{0.24\textwidth} 
        \centering
        \includegraphics[width=\textwidth]{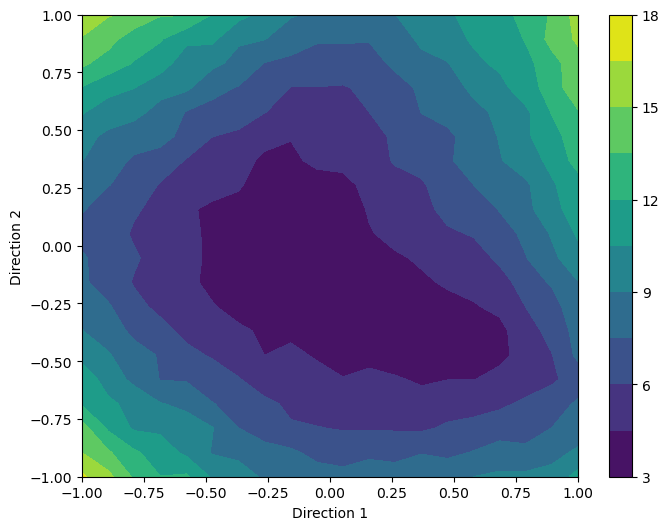}
        \subcaption{}
    \end{minipage} 
    \begin{minipage}{0.24\textwidth}
        \centering
        \includegraphics[width=\textwidth]{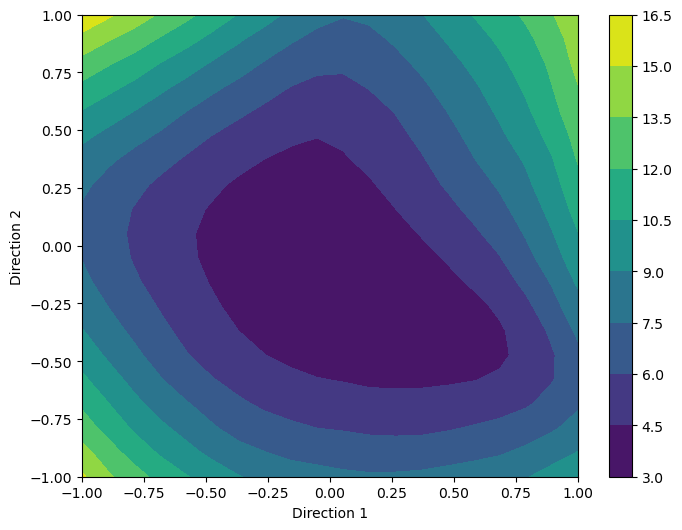}
        \subcaption{}
    \end{minipage} 
    \begin{minipage}{0.24\textwidth}
        \centering
        \includegraphics[width=\textwidth]{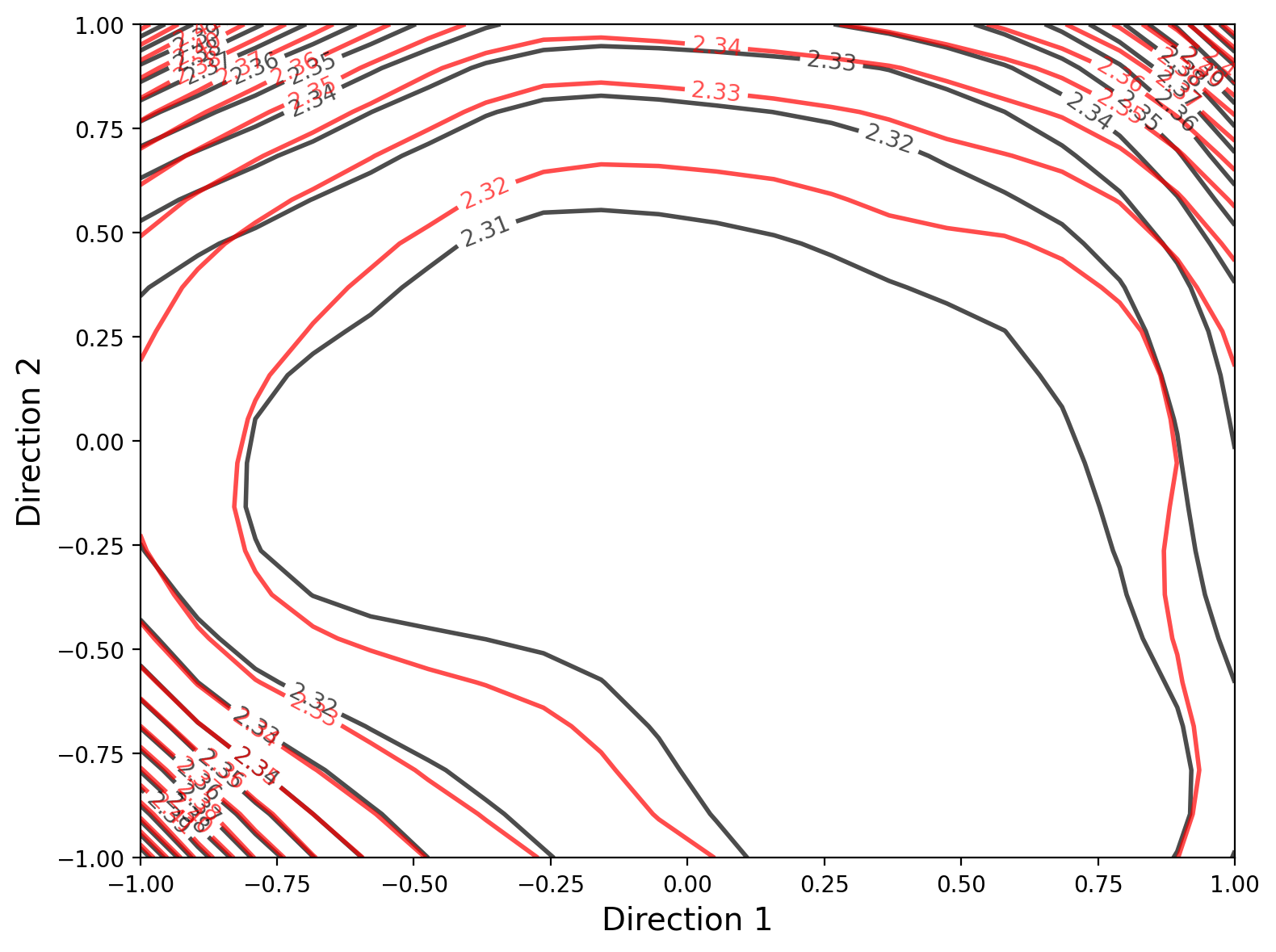}
        \subcaption{}
    \end{minipage} 
    \begin{minipage}{0.24\textwidth}
        \centering
        \includegraphics[width=\textwidth]{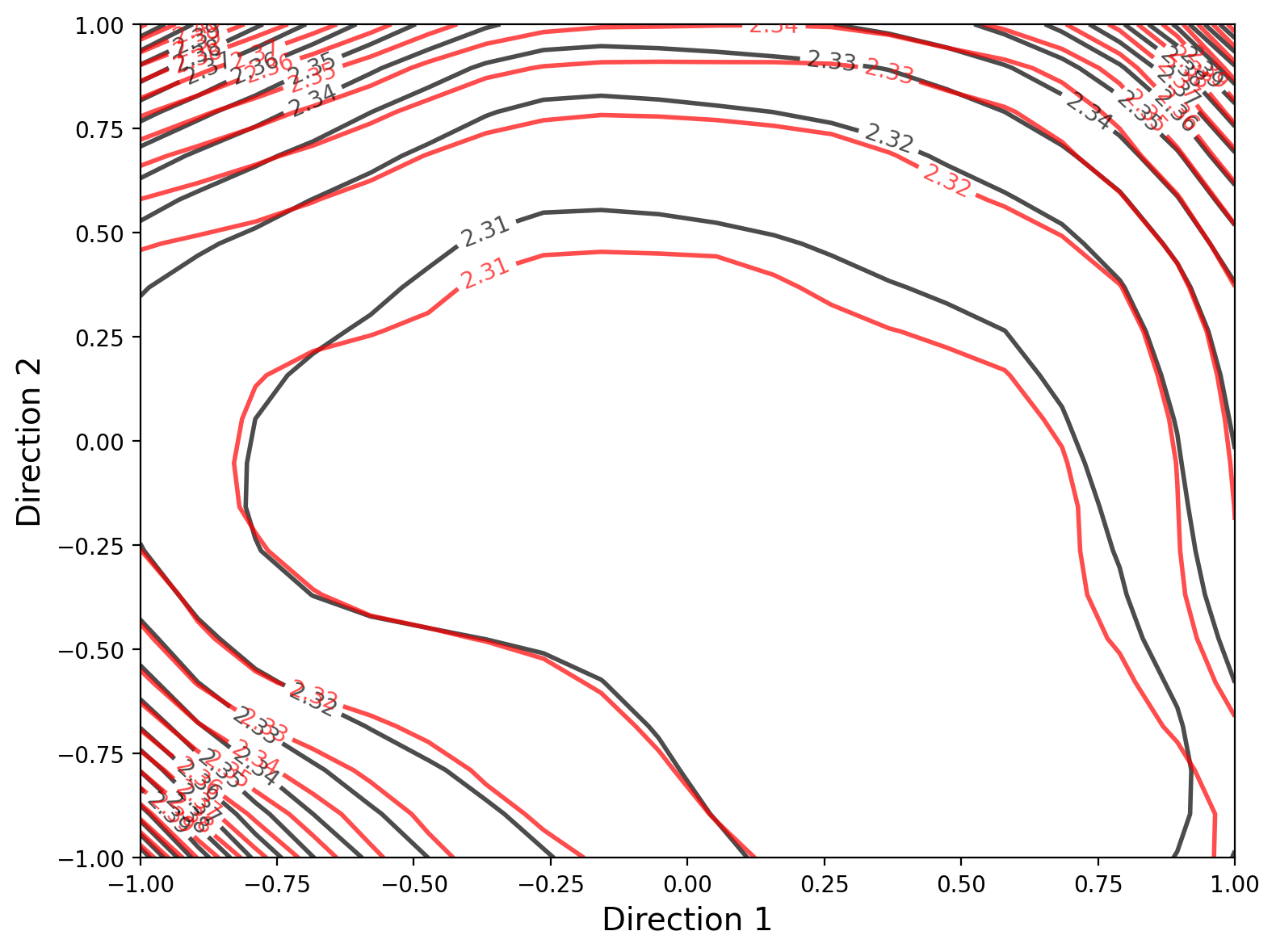}
        \subcaption{}
    \end{minipage}
    \caption{\textbf{(Left) The smoothed surface:} The loss landscape generated by coresets selected by (a) Craig~\citep{mirzasoleiman2020coresetsdataefficienttrainingmachine}  The loss landscape generated by the coreset of (b) our method. Our method smoothens the induced landscape. The graph is generated with CIFAR10 data and ResNet20 using 1\% data budget. 
    \textbf{(Right) Better match in loss landscape:} The black line in both the figures represent the loss landscape of the full training set for MNIST with LeNet using 1\% data budget for coreset selection. (d) Our method matches the underlying landscape better than (c) Craig. The mean square error between the two landscapes for Craig and our method are 0.0703  and 0.0662 respectively. (see details in Appendix \ref{loss landscape experiment} and Figure \ref{fig:3d loss landscape}.)}
    \label{fig:lossLandscape side by side}
\end{figure}

\label{sec:intro}
Modern deep learning thrives on massive datasets, but training on all available data can be prohibitively slow. Coreset selection aims to address this issue by selecting a small, representative subset of training data that can be used in lieu of the full dataset, thus accelerating training while preserving model performance.  Recent gradient-based coreset methods \citep{mirzasoleiman2020coresetsdataefficienttrainingmachine, killamsetty2021gradmatchgradientmatchingbased, pooladzandi2022adaptivesecondordercoresets, yang2023sustainablelearningcoresetsdataefficient} have shown promise by approximating the full-data gradient using a subset. In practice, however, we observe a critical failure mode in these approaches: the loss landscape induced by the selected subset is often misaligned with that of the full dataset. This misalignment becomes especially pronounced under realistic conditions like label noise, adversarial corruptions, or very small subset budgets, leading to poor generalization despite the subset’s seeming adequacy in matching gradients at selection time.

To illustrate this misalignment problem, consider a model selecting a coreset on a noisy dataset. Gradient-based selection methods tend to prioritize examples with large gradient magnitudes. If some training labels are corrupted (or the model is under-trained, the coreset size is small), such methods can inadvertently overweight outliers or mislabeled examples simply because they induce large immediate loss gradients. The resulting subset produces a distorted loss surface: the model trained on this subset will descend along directions that differ from those it would follow if it saw all data (See Section~\ref{sec:trajectoryDiff}). In other words, the subset’s gradient and curvature information (its geometry) deviate from the full data’s. In fact, even without noisy labels, minor aberrations in gradient approximations (e.g. due to small coreset bugdet) can distort the induced loss landscape (see Figure~\ref{fig:lossLandscape side by side} (Left)). We term this phenomenon loss landscape misalignment. Over time this discrepancy can magnify – the model’s update on the subset can lead it into a region of parameter space that is suboptimal for the true objective. Empirically, we observe that with 20\%–50\% label noise on benchmarks like CIFAR-10, CIFAR-100, and TinyImageNet, the performance of many gradient-matching based methods degrades catastrophically – in some cases barely above chance – revealing their brittleness in the face of corrupted data.

 Prior attempts to address this instability have looked to second-order information. By incorporating Hessian estimates, one can in principle better align a coreset’s loss landscape with the full data’s landscape. Indeed, recent methods have tried to match not only gradients but also curvatures (e.g. using approximate Hessians) to select more robust subsets~\citep{yang2023sustainablelearningcoresetsdataefficient}. Unfortunately, these second-order approaches come with serious drawbacks. Computing or approximating Hessians in deep networks is extremely costly and often numerically unstable. The required computations can nullify the very speed-ups that coreset selection aims to achieve (as noted by~\citet{okanovic2023repeatedrandomsamplingminimizing,mahmood2023support}). Further, we empirically also observe these methods are still susceptible to noise in the labels and can even diverge due to sensitivity to noisy curvature estimates.
 
 We propose a new perspective: posterior smoothing for coreset selection. Rather than deterministically matching gradients or explicitly calculating Hessians, we sample model weight perturbations from a Gaussian posterior centered at the current parameters, and use these perturbed weights to estimate the landscape and evaluate the coreset selection criteria. Intuitively, this strategy smoothens out the loss landscape by marginalizing over a local neighborhood in weight space, effectively simulating a Bayesian posterior over models. By averaging gradient information across multiple weight samples, our method obtains a Monte Carlo smoothing of the coreset objective, and yields a more stable and geometry-aligned selection criterion.  This yields a smoothed loss function that closely preserves the underlying structure of the true loss landscape while damping spurious oscillations due to noise or small sample size.
 
 The proposed posterior-sampling strategy is both scalable and theoretically grounded. It requires no explicit Hessian computations; the added overhead is merely a handful of forward passes for sampling, which is much faster compared to full gradient evaluation. We provide a rigorous analysis showing that posterior smoothing provably improves alignment between the induced and full-data loss landscapes. In particular, under standard smoothness assumptions, we prove that our posterior sampling leads to tighter approximation of the true gradient and Hessian of the entire dataset (see Theorem \ref{thm:losslandscape}). This result formalizes the notion of “geometry alignment”: the subset sees nearly the same curvature as the full data (see Figure~\ref{fig:lossLandscape side by side} (Right) for a real-world illustration). Finally, we derive convergence guarantees for training on smoothed coresets: under standard assumptions (e.g. smoothness and bounded curvature), we prove that optimizing on the coreset will converge to a solution that is $\epsilon$-close to the full data optimum. As a side effect, we improve the best known previous rate under the same set of assumptions by $O(1/\sqrt{M})$ for multiplicative noise, where $M$ is the number of samples used in Monte Carlo smoothing. 

Empirical results on a wide range of datasets strongly support our claims. Our posterior-based coreset selection consistently outperforms state-of-the-art methods in both accuracy and convergence speed, while using a fraction of the data. The gains are most pronounced in high-noise settings – for instance, with 50\% corrupted labels on SNLI, our approach outperforms the next best method by $7\%$ absolute accuracy. Interestingly, for this setting the next best method is Random Sampling — a notoriously strong baseline in data selection. We find that our method handily beats the baselines across almost all settings across multiple datasets (SNLI, TinyImageNet, ImageNet-1k, CIFAR-100, CIFAR-10, MNIST), across different architectures (LeNet on MNIST to ResNet-50 on TinyImageNet/Imagenet-1k and RoBERTa on SNLI), across different subset sizes and different noise corruption levels. Further, our memory footprint is smaller, especially compared to the next most frequently best method (Crest~\citep{yang2023sustainablelearningcoresetsdataefficient}) which requires expensive Hessian approximations and sometimes has larger memory footprint (see Figure~\ref{fig:grad_mem_combined} (Right)) with longer run time than our method and sometimes even full data training. To further solidify the supremacy of our method, we also achieve $20\%-200\%$ speedup for time-to-highest-accuracy compared to Crest across several different datasets.  Our ablation studies reveal practical insights as well. For example, we find that posterior sampling through normalization layers (rather than into all model weights or final-layer weights) provides an optimal trade-off between perturbation and stability, further boosting performance (See Table \ref{tab:layer_comparison}).

To summarize, we make the following technical contributions in this paper:
\begin{itemize}
	\item \textbf{Robust coreset selection via posterior sampling.} We introduce a novel posterior smoothing framework for loss landscape alignment under subset selection. We prove that sampling weights from a local Gaussian posterior yields more faithful gradient and curvature estimates for the subset, resulting in provable improvements in both Hessian alignment and Newton-step similarity between the coreset and full dataset (see Theorem~\ref{thm:losslandscape}). This perspective opens up a new family of coreset methods that rigorously account for model uncertainty during selection.
	\item \textbf{Extended convergence theory.} We provide a comprehensive convergence analysis for sampling-based coreset SGD,  strengthening and improving prior theoretical results.(see Theorem \ref{thm:convergence}) Our analysis accounts for the dual sources of randomness – subset selection and weight sampling – and characterizes how different noise structures (e.g. spherical vs. Hessian-informed Gaussian) influence convergence rates. 
	\item 
	\textbf{Exhaustive empirical validation.} Through extensive experiments on vision and NLP benchmarks, we demonstrate that our approach consistently outperforms existing coreset methods across both clean and corrupted datasets across different noise levels and model architectures. Notably, under severe label noise (20–50\% of labels corrupted on CIFAR-10, CIFAR-100, TinyImageNet, ImageNet-1k, SNLI), our method remains remarkably robust while prior gradient-based approaches fail. We also highlight key implementation findings – for example, the importance of imposing posteriors in solely the normalization layers – that further enhance coreset effectiveness. Together, our results establish a new state of the art with minimal overhead and memory footprint in both accuracy and stability for coreset selection in deep learning.
	
\end{itemize}
For a detailed review of related work, please refer to Appendix \ref{Related work}.

\section{Background}
\label{sec:background}
\begin{algorithm}[t]
    \caption{Ensemble Coreset($r$, $E$, $T$, $B$, $w_i$, $\eta$, $P$)}
    \label{algo}
    \begin{algorithmic}[1]
        \STATE \textbf{Parameters:} Subsample size $R$, Ensemble size $M$, Max epochs $T$, Number of Batch $B$, 
         Initial model parameters $w_0$, Learning rate $\eta$, Number of batch selection $P$, Gaussian Prior $\delta$, minibatch size $m$
        \FOR{$t = 1$ to $T$}
            \FOR{$p = 1$ to $P$}
                \STATE Select random subset $V_p \subseteq S$, $|V_p| = R$
                \STATE Select $S_p \in \arg \min_{S_p \subseteq V_p} \sum_{i \in V_p} \min_{j \in S_p} E_{\delta} ||\nabla l_j(w_t + \delta) - \nabla l_i(w_t + \delta)||$,\; $|S_p| \leq m$
            \ENDFOR
            
            \item $S_t = \bigcup_{p \in [P]} \{S_p\}$
            \FOR{$b = 1$ to $B$}
                \STATE Sample batch $S_b \subseteq S_t$, $|S_b| = m$
                \STATE $w_{t, b+1} = w_{t,b} - \eta \nabla l_{S_b}(w_{t,b})$
            \ENDFOR
        \ENDFOR
    \end{algorithmic}
\end{algorithm}

The classic training objective is to optimize the Empirical Risk Minimization (ERM) on the training dataset. 
\begin{equation} \label{eqERM}
    \begin{split}
        w^* = \operatorname*{argmin}_{w} l(w) = \operatorname*{argmin}_{w} \sum_{i=1}^n l_i(w).
    \end{split}
\end{equation}
For tractability and better generalization, in practice, we use SGD (stochastic gradient descent) for optimization of~\eqref{eqERM} can lead to large memory burden and can be inefficient due to the calculation time. The iterative update of the model parameter can be written as:
\begin{equation} \label{eq2}
    \begin{split}
        w_{t+1} = w_t - \eta \nabla l_{s'}(w_t).
    \end{split}
\end{equation}
where $s$ is the random subset sampled from the whole training dataset and the gradient $\nabla l_{s'}(w_t)$ is evaluated on the subset. Several studies have investigated the convergence rate of stochastic gradient descent (SGD) under different settings. In particular, \citet{ghadimi2013stochasticfirstzerothordermethods} established a convergence rate of $\mathcal{O}(\frac{1}{\sqrt{t}})$ , while \citet{7298885} extended this analysis to the mini-batch setting, obtaining $\mathcal{O}(\frac{1}{\sqrt{Rt}})$, where $R$ represents the mini-batch size.

Building up on the SGD, the gradient-based coreset selections \citep{mirzasoleiman2020coresetsdataefficienttrainingmachine, killamsetty2021gradmatchgradientmatchingbased, pooladzandi2022adaptivesecondordercoresets} aim to find a subset of data whose gradient aligns with the gradient direction of whole training dataset. The formulation is as follows.
\begin{equation} \label{eq:originalcoresetproblem}
    \begin{split}
       S^* &= \operatorname*{argmin}_{S'\subseteq S, \gamma_j \geq 0} |S'| \;\; 
       \text{s.t} \operatorname*{max}_{w_t \in W} ||\sum_{i \in S} \nabla l_i(w_t) - \sum_{j \in S'} \gamma_j \nabla l_j(w_t)|| \leq \epsilon.
    \end{split}
\end{equation}
Here $\gamma_i$ is the weight for the specific sample $i$. The goal is to jointly optimize for both--the subset $S'$ out of the full dataset $S$ and the weights $\gamma_i$ while ensuring an error of at most $\epsilon$. Practically, the inner optimization over $W$ is often omitted by setting $W = \{w_t\}$, the singleton set of the current iterate in the SGD. As pointed out by \citet{mirzasoleiman2020coresetsdataefficienttrainingmachine}, we can transform the problem~\eqref{eq:originalcoresetproblem} into a submodular cover problem  (see Appendix \ref{greedy discussion} for details): 
\begin{equation} \label{eq:crest}
    \begin{split}
       S^* &= \operatorname*{argmin}_{S'\subseteq S} |S'| \;\; 
       \text{s.t} \;\; \sum_{j \in S'}  \operatorname*{min}_{i \in S} || \nabla l_i(w_t) -  \nabla l_j(w_t)|| \leq \epsilon.
    \end{split}
\end{equation}

\begin{figure}[t]
    \centering
    \begin{subfigure}[t]{0.405\textwidth}
        \centering
        \includegraphics[width=\textwidth]{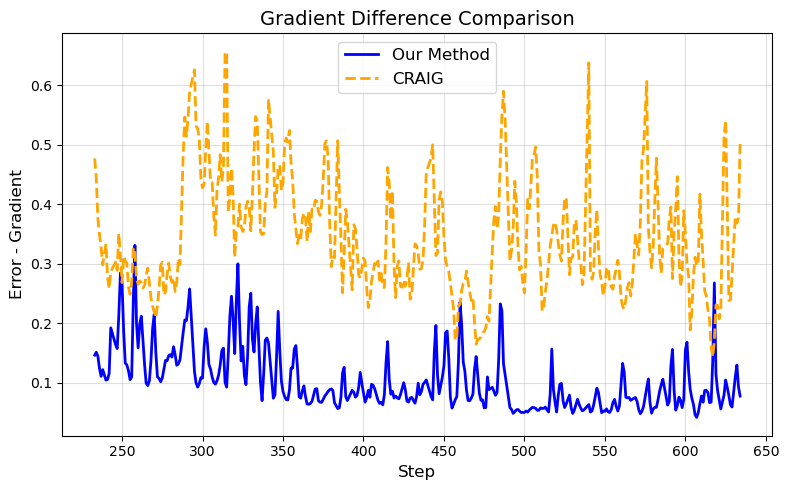}
    \end{subfigure}
    \begin{subfigure}[t]{0.40\textwidth}
        \centering
        \includegraphics[width=\textwidth]{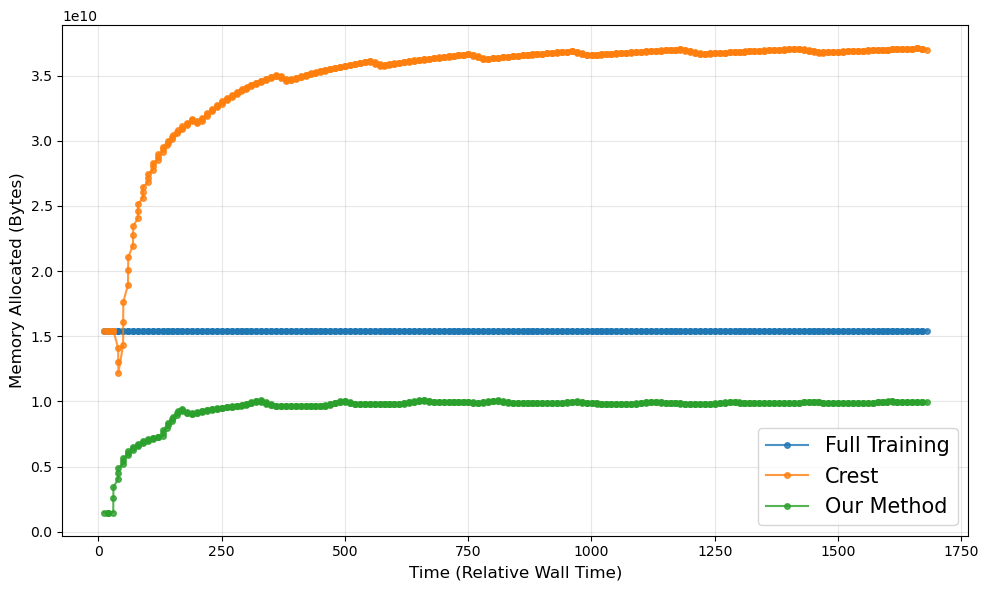}
        \label{fig:memory}
    \end{subfigure}
    
\caption{
     \textbf{(Left) Gradient match:} The average gradient estimation error for our method and Craig selection method using LeNet on MNIST. The error is calculated with 
    $ \left| \frac{1}{|S|} \sum_{i \in S} \nabla l_i(w_t) - \frac{1}{|S'|} \sum_{j \in S'} \gamma_j \nabla l_j(w_t) \right| $, 
    where $S$ is the training set and $S'$ is the subset selected. Our method generally produces smaller gradient errors and better gradient estimation compared to the Craig method. \textbf{(Right) Memory buffer:} The memory consumption for TinyImagenet. Our method shows less memory during the training process. Crest(~\cite{yang2023sustainablelearningcoresetsdataefficient}) requires to maintain the information of Hessian during training and the intermediate calculation such as checking the threshold calculation of Hessian norm also require large memory buffer. For the plot, we use running average to average over the time step and show the mean value.}
    \label{fig:grad_mem_combined}
\end{figure}
\section{Posterior-Stable Coreset Selection: New Paradigm for Landscape-Aligned Subsets}
\label{sec:algo}

The stability in optimization and its importance for generalization is widely studied in the \citet{nguyen2022algorithmicstabilitygeneralizationadaptive, chen2018stabilityconvergencetradeoffiterative, lei2023stabilitygeneralizationstochasticoptimization, harrison2022closerlooklearnedoptimization, attia2022uniformstabilityfirstorderempirical}. For example, \citet{bisla2022lowpassfilteringsgdrecovering, duchi2012randomizedsmoothingstochasticoptimization} suggest making the model more stable by sampling model weights with Gaussian posterior. This leads to smoothening of the loss surface and the induced stability can help with generalization. \citet{liu2022random} adapted a similar idea to sharpness-aware minimization algorithm to help stabilize the training process and gain performance advantage.

We build on these previous works and seek a related but different goal. Subset selection can lead to highly unstable and non-smooth loss surfaces \citep{shin2023losscurvaturematchingdatasetselection} (see Figure \ref{fig:lossLandscape side by side}). How do we choose a coreset that improves stability of the selection process so that the induced loss landscape matches with the underlying ERM loss landscape ? Towards this end, we first propose a new definition for the stability of the selected coresets:

\begin{definition}
\textbf{$(\sigma,\epsilon, \bar{w})$}-\textbf{Stability}. A subset $S'$ is called $(\sigma,\epsilon, \bar{w})$-stable if there exists $\sigma, \epsilon$ such that
\begin{equation} \label{eq:stableCoresets}
    \begin{split}
        E_{w \sim N(\bar{w}, \sigma I)}\|\nabla l_{S'}(w) - \nabla l(w)\|^2_2 \leq \epsilon,
    \end{split}
\end{equation}
\end{definition}

where $\nabla l_{S'}(w)$ is the weighted gradient over the subset $S'$ and $\nabla l(w)$ is the full gradient over the entire dataset. This definition captures the essence of stability by quantifying how much the gradient varies when considering perturbations around the model parameter $\bar{w}$. Similar stability frameworks have been explored in the literature~\citep{bisla2022lowpassfilteringsgdrecovering, liu2022random, duchi2012randomizedsmoothingstochasticoptimization} highlighting the crucial role of stability in ensuring that machine learning models for generalization and robustness. A key side-effect of the stability is the smoothening of the function (See Theorem 1 in \cite{bisla2022lowpassfilteringsgdrecovering}).




\paragraph{Algorithm} Our goal is to select the $(\sigma,\epsilon, \bar{w})$-stable coresets instead of solving the classic coreset problem~\eqref{eq:originalcoresetproblem} by replacing the maximization over the domain $W$ with the stability constraint. Similar to the transformed coreset problem~\eqref{eq:crest}, we write our coreset selection optimization problem:
\begin{equation} \label{eq:ourCorsets}
    \begin{split}
       S^* = \operatorname*{argmin}_{S'\subseteq S} |S'| \;\; 
       \text{s.t} \;\; \sum_{i \in S} \min_{j \in S'} E_{\delta} ||\nabla l_j(\bar{w} + \delta) - \nabla l_i(\bar{w} + \delta)||\; \leq \epsilon,
    \end{split}
\end{equation}
where $\delta\sim N(0,\sigma I)$ as $\sigma$ is a hyperparameter or a design variable that controls the perturbation of the weights $\bar{w}$. By optimizing this selection process, we aim to ensure that the coreset $S'$ retains the essential characteristics of the original dataset while adhering to the stability constraints characterized. 

Our algorithm is presented in Algorithm~\ref{algo}. (For time complexity analysis, please refer to appendix \ref{time complexity analysis}) In each epoch $t$, create a pseudo dataset by subsampling stable gradient-matched data points, $P$ times. Instead of solving the constrained selection to match full gradient up to $\epsilon$, we select $m$ data points each time as is standard in other implementations of similar algorithms. This creates a shadow dataset $S_t$ of size $P*m$. We then use the shadow dataset $S_t$ to compute the gradients and optimize the model parameters using batched stochastic gradient descent on $S_t$. This iterative approach allows for refining the model with each epoch, progressively improving its convergence properties and stability. (For more detailed discussion, see Appendix~\ref{algorithm design details} and Appendix~\ref{time complexity analysis})

For deep neural networks (DNNs), adding stability to the entire $w_t$ can be prohibitively expensive. Previous works have tried to mitigate this by focusing on the parameters in the last layer of the neural network~\citep{yang2023sustainablelearningcoresetsdataefficient, killamsetty2021gradmatchgradientmatchingbased, pooladzandi2022adaptivesecondordercoresets}. However, more recent research~\citep{mahmood2023support} has shown that such strategies can be detrimental because of implicit regularization properties of the SGD. Motivated by the recent studies~\citep{mueller2023normalizationlayerssharpnessawareminimization,frankle2021trainingbatchnormbatchnormexpressive, xu2019understandingimprovinglayernormalization} on importance of normalization layers for stability and improved predictive performance in DNNs, we focus on our attention for sampling in the batch normalization layers.  

We now discuss theoretical properties of stable coresets. Our algorithm can be viewed as coreset selection on the smoothed loss function that leads to stable and better performance on we shall see in the sequel. Furthermore, we establish the relationship between the posterior sampling and the loss landscape. Previous works have proposed that in order to successfully select key samples in training dataset, the loss landscape over selected samples must match to the loss landscape of the whole training dataset. To do so, these works relied on calculating or approximating the parameter Hessian matrix which can be prohibitively expensive and can even offset the speedup obtained by using coresets while also requiring additional memory. Instead, our sampling based strategy is a cheaper and faster way to match the underlying loss landscape without direct calculation Hessians.  
To understand the relationship between the Gaussian perturbation and loss landscape matching. We propose a our theory in the following. (The proof is in appendix \ref{second order proof})

\begin{theorem}
\label{thm:losslandscape}
Suppose a subset $S'\subset S$ is  $(\sigma,\epsilon, w)$-stable and let the Hessian difference be $H_{S',w} - H_{S,w} =: \mathcal{E}$, then,

(1)  The Hessian difference matrix $\mathcal{E}$ satisfies: \begin{equation*}
    \begin{split}
        \|\mathcal{E}\| \leq  \mathcal{O}(\epsilon^\frac{1}{2}) \;\; \text{and} \;\; \trace (\mathcal{E}^2) \leq \mathcal{O}{(\frac{\epsilon}{\sigma})}.
    \end{split}
\end{equation*}
(2) The difference between newton step of two subset is bounded.
\begin{equation*}
    \begin{split}
        \|H^{-1}_{S',w}\nabla l_{S'}(w) - H^{-1}_{S,w} \nabla l_S(w)\| \leq \mathcal{O}(\epsilon^\frac{1}{2}).
    \end{split}
\end{equation*}
\end{theorem}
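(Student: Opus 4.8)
The plan is to leverage the $(\sigma,\epsilon,w)$-stability hypothesis, which controls $E_{\delta}\|\nabla l_{S'}(w+\delta) - \nabla l(w+\delta)\|_2^2 \le \epsilon$ with $\delta \sim N(0,\sigma I)$, and to relate this expected gradient gap to the Hessian gap $\mathcal{E} = H_{S',w} - H_{S,w}$. The starting observation is that for a smooth function $f$, the quantity $E_{\delta}[\nabla f(w+\delta)]$ is (up to higher-order terms) the gradient of the Gaussian-smoothed function $\tilde f(w) = E_{\delta}[f(w+\delta)]$, and a well-known identity (Stein's lemma / Bonnet–Price) gives $\nabla \tilde f(w) = \nabla f(w) + \tfrac{\sigma}{2}\nabla (\Delta f)(w) + \dots$ and, more usefully here, relates second moments of the perturbed gradient to the Hessian. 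Concretely, I would first write $g(w) := \nabla l_{S'}(w) - \nabla l(w)$ so that $\nabla g(w) = \mathcal{E}$, expand $g(w+\delta) = g(w) + \mathcal{E}\,\delta + O(\|\delta\|^2)$, and take the squared norm in expectation. The cross term vanishes since $E[\delta]=0$, and $E[\|\mathcal{E}\delta\|^2] = \sigma \,\trace(\mathcal{E}^\top \mathcal{E}) = \sigma\,\trace(\mathcal{E}^2)$ (using symmetry of Hessians). Hence $\epsilon \ge E_\delta\|g(w+\delta)\|^2 \ge \sigma\,\trace(\mathcal{E}^2) - (\text{remainder})$, which after absorbing the remainder into the $\mathcal{O}$ gives $\trace(\mathcal{E}^2) \le \mathcal{O}(\epsilon/\sigma)$. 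Since $\|\mathcal{E}\|^2 \le \trace(\mathcal{E}^2)$ for a symmetric matrix (operator norm bounded by Frobenius norm), this would naively give $\|\mathcal{E}\| \le \mathcal{O}(\sqrt{\epsilon/\sigma})$; to get the sharper $\|\mathcal{E}\| \le \mathcal{O}(\sqrt{\epsilon})$ claimed in part (1), I would instead bound $\|\mathcal{E}\|$ directly from $\|g(w)\|^2 \le E_\delta\|g(w+\delta)\|^2 + (\text{correction})$ together with a curvature/Lipschitz-Hessian assumption, or bound $\|g(w)\| \le \sqrt\epsilon$ at the center and combine with smoothness of the per-sample losses to control $\mathcal{E}$ via finite differences of $g$.

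For part (2), I would decompose the Newton-step difference using the standard algebraic identity
\begin{equation*}
H^{-1}_{S',w}\nabla l_{S'} - H^{-1}_{S,w}\nabla l_S = H^{-1}_{S',w}\bigl(\nabla l_{S'} - \nabla l_S\bigr) + \bigl(H^{-1}_{S',w} - H^{-1}_{S,w}\bigr)\nabla l_S,
\end{equation*}
and then apply the resolvent identity $H^{-1}_{S',w} - H^{-1}_{S,w} = -H^{-1}_{S',w}\,\mathcal{E}\,H^{-1}_{S,w}$ to the second term. Under a uniform lower bound on the Hessian eigenvalues (strong convexity / bounded-curvature assumption, as invoked for Theorem~\ref{thm:convergence}) both inverses have operator norm $\mathcal{O}(1)$, and $\|\nabla l_S\|$ is bounded; so the first term is $\mathcal{O}(\|\nabla l_{S'} - \nabla l_S\|) = \mathcal{O}(\sqrt\epsilon)$ (the gradient gap at the center is $\le\sqrt\epsilon$ by Jensen applied to the stability inequality, or by the center-bound argument above), and the second term is $\mathcal{O}(\|\mathcal{E}\|) = \mathcal{O}(\sqrt\epsilon)$ by part (1). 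Summing gives the $\mathcal{O}(\sqrt\epsilon)$ bound.

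The main obstacle I anticipate is making the Taylor-remainder control rigorous: the expansion $g(w+\delta) = g(w) + \mathcal{E}\delta + O(\|\delta\|^2)$ has a remainder whose expectation scales with $\sigma^2$ (times a third-derivative bound on the losses), and I need this remainder to be genuinely lower-order relative to the $\sigma\,\trace(\mathcal{E}^2)$ term — which requires either an explicit assumption that $\sigma$ is small, or a Lipschitz-Hessian bound $L_H$ so the remainder is $\le L_H^2 \sigma^2 d$ and one argues $\trace(\mathcal{E}^2) = \mathcal{O}(\epsilon/\sigma)$ holds whenever, say, $\epsilon \gtrsim \sigma^2$. A second, related subtlety is the reconciliation of the two bounds in part (1): $\|\mathcal{E}\| \le \mathcal{O}(\sqrt\epsilon)$ is \emph{weaker} than $\sqrt{\trace(\mathcal{E}^2)} \le \mathcal{O}(\sqrt{\epsilon/\sigma})$ only when $\sigma \le 1$, so I would state the hypotheses ($\sigma \in (0,1]$, bounded third derivatives, Hessian eigenvalues bounded below) explicitly at the top of the proof and carry the constants through the two estimates consistently, flagging exactly where each assumption enters.
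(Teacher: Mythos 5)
Your proposal follows essentially the same route as the paper's proof: both Taylor-expand the gradient gap $g(w+\delta)$ inside the stability expectation, read off $\|g(w)\| = \mathcal{O}(\sqrt{\epsilon})$ from the zeroth-order term and $\sigma\,\trace(\mathcal{E}^2) \le \mathcal{O}(\epsilon)$ from the quadratic term $E\|\mathcal{E}\delta\|^2$, control the cross and remainder terms via a Lipschitz-Hessian-difference assumption, and prove part (2) by a first-order perturbation of $H^{-1}$ (your exact resolvent identity versus the paper's truncated Neumann expansion) under bounded Hessian eigenvalues and bounded gradients. The subtleties you flag — that the Taylor remainder must be genuinely lower order, and that the $\mathcal{O}(\sqrt{\epsilon})$ operator-norm bound is really $\mathcal{O}(\sqrt{\epsilon/\sigma})$ with $\sigma$ and $d$ treated as constants — are precisely the points the paper's appendix handles with its explicit constants $c_1$, $c_2$, $\sigma$, $d$.
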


\textbf{Discussion:}  Theorem~\ref{thm:losslandscape} formalizes the notion that posterior sampling provides an effective means to ensure the alignment of the loss landscape of the chosen coreset with that of the underlying data. Specifically, if the gradients of a selected subset match those of the full dataset under Gaussian posterior sampling, the discrepancy between their Hessians remains small. This aligns with the findings of \citet{shin2023losscurvaturematchingdatasetselection}, which emphasize the necessity of Hessian similarity across subsets. Furthermore, our approach implicitly ensures that inverse-Hessian weighted gradients are also aligned, satisfying the selection criteria established by \citet{pooladzandi2022adaptivesecondordercoresets}. By leveraging posterior sampling, we circumvent the computational challenges associated with direct Hessian comparisons while preserving theoretical rigor. For even greater control over this alignment, more sophisticated posterior distributions can be designed to precisely fine-tune the loss landscape matching between the coreset and the full training dataset (see appendix \ref{posterior discussion}). 

\begin{figure*}[t]
    \centering
    \begin{subfigure}[t]{0.24\textwidth} 
        \centering
        \includegraphics[width=\textwidth]{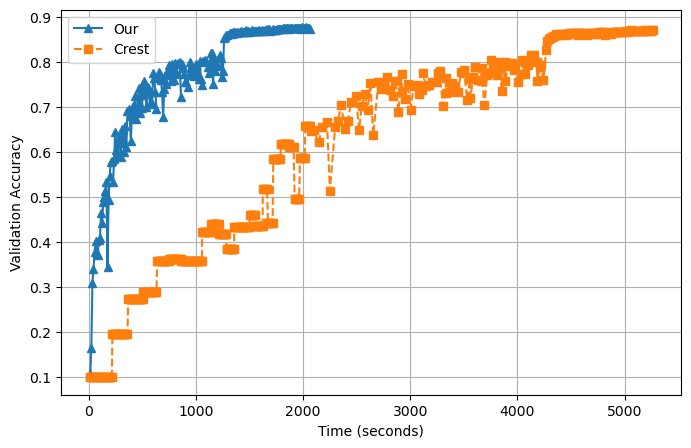}
        \caption{CIFAR-10}
    \end{subfigure} 
    \begin{subfigure}[t]{0.24\textwidth}
        \centering
        \includegraphics[width=\textwidth]{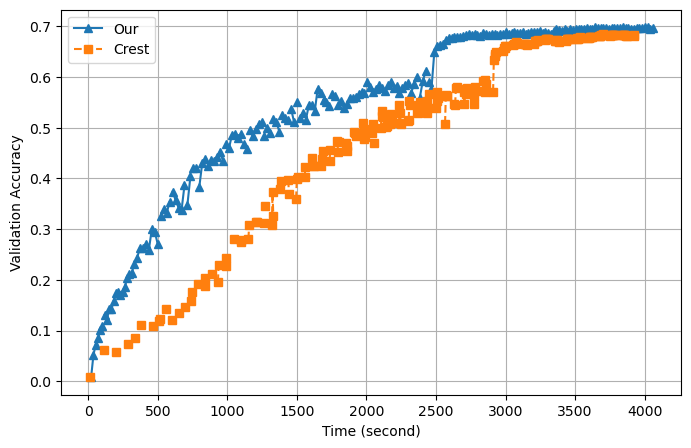}
        \caption{CIFAR-100}
    \end{subfigure}
    \begin{subfigure}[t]{0.24\textwidth}
        \centering
        \includegraphics[width=\textwidth]{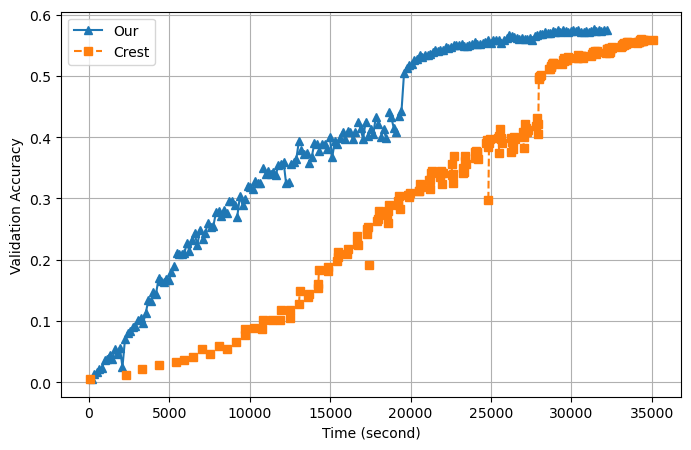}
        \caption{TinyImagenet}
    \end{subfigure}
    \begin{subfigure}[t]{0.24\textwidth}
        \centering
        \includegraphics[width=\textwidth]{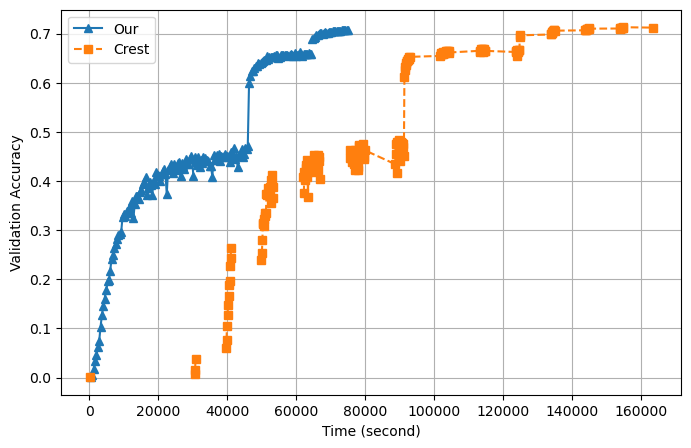}
        \caption{Imagenet-1k}
    \end{subfigure}
    \caption{\textbf{Time-to-accuracy:} We plot the time taken to achieve certain validation accuracy against the state-of-the-art Crest~\cite{yang2023sustainablelearningcoresetsdataefficient} which uses $2^\text{nd}$ order information for coreset selection and show that our proposed method is more efficient and faster in achieving the same validation accuracies.}
    \label{fig:time accuracy side by side}
\end{figure*}

\textbf{Convergence Analysis} Lastly, we also provide convergence analysis for the coresets Mini-batch Stochastic GD along the lines of SGD analysis of \citet{yang2023sustainablelearningcoresetsdataefficient} except that we generalize it for the stability setting and improve on their convergence rate. 

\begin{theorem}
\label{thm:convergence}
Say $w \sim N(w_t, \sigma_2 I)$ at epoch $t$. Assume the $l(\cdot)$ is $\beta$-smooth, and the expectation in~\eqref{eq:ourCorsets} is calculated by taking $M$ samples.  We consider noise in the gradients resulting from the random sampling of batches and coreset selection as $\xi_1$ and $\xi_2$ respectively:
\begin{equation*}
    \begin{split}
        \nabla l(w) = \nabla l_S(w) + \xi_1 + \xi_2
    \end{split}
\end{equation*}
(1) (Absolute noise) If the noise in coreset selection is of the form: 
\begin{equation*}
    \begin{split}
        E[||\xi_2||] &\leq \epsilon,
    \end{split}
\end{equation*}
then by setting the learning rate to be $\eta = \min \{\frac{1}{\sqrt{T}}, \frac{1}{\beta}\}$ and $\sigma^2_2d = \frac{1}{M\sqrt{T}}$, We can have convergence rate $\frac{1}{\sqrt{T}}$
\begin{equation}
    \begin{split}
      E_t ||\nabla l(w_t)||^2
       &\leq \mathcal{O}(\frac{1}{\sqrt{T}}(l(w_0) - l(w^*) + \frac{\beta^2}{2M} + \frac{\beta\epsilon^2}{M}+\frac{\beta\sigma_1^2}{MR}))
    \end{split}
\end{equation}
2. (Multiplicative noise) If the noise in estimation of gradients is of the form below 
\begin{equation}
    \begin{split}
        E[||\xi_2||] \leq \epsilon ||\nabla l(w)||
    \end{split}
\end{equation}
If we set $\sigma^2_2d = \frac{1}{M\sqrt{T}}$ and $\eta = \min \{\frac{1}{\sqrt{T}}, \frac{1}{\beta}\}$, we can have convergence rate $\frac{1}{\sqrt{MRT}}$
\begin{equation}
    \begin{split}
        \frac{1}{T}\sum_{t=0}^{T-1}||\nabla l(w_t)||^2 
        &= \mathcal{O} (\frac{1}{\sqrt{MRT}}(2(l(w_0) - l(w^*)) +\beta^2 + 2\beta\sigma_1^2))
    \end{split}
\end{equation}
\end{theorem}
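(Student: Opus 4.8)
The plan is to run the classical nonconvex-SGD argument (descent lemma followed by telescoping), but to bookkeep separately the three sources of error in the update direction: the Gaussian weight perturbation $\delta\sim N(0,\sigma_2 I)$ used to smooth the landscape, the Monte-Carlo average over its $M$ samples, and the minibatch/coreset sampling noise $\xi_1+\xi_2$. I would write the effective update as $w_{t+1}=w_t-\eta\,\hat g_t$ with $\hat g_t=\tfrac1M\sum_{m=1}^M\nabla l_{S_b}(w_t+\delta_m)$, handling the inner loop over the $B$ minibatches by applying the same per-step bound and relabeling. Starting from $\beta$-smoothness, $l(w_{t+1})\le l(w_t)-\eta\langle\nabla l(w_t),\hat g_t\rangle+\tfrac{\beta\eta^2}{2}\|\hat g_t\|^2$, I would condition on $w_t$ and split $\hat g_t=\nabla l(w_t)+b_t+n_t$, where $b_t=E[\hat g_t\mid w_t]-\nabla l(w_t)$ collects a smoothing bias $\nabla l_{\sigma_2}(w_t)-\nabla l(w_t)$ (bounded by $\beta\,E\|\delta\|\le\beta\sqrt{\sigma_2^2 d}$ via smoothness) together with a coreset-selection bias (controlled by $\|E\xi_2\|\le\epsilon$ in the absolute case and $\le\epsilon\|\nabla l(w_t)\|$ in the multiplicative case), while $n_t$ is the zero-mean residual.

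Next I would bound the cross term by $-\eta\|\nabla l(w_t)\|^2+\tfrac\eta2\|\nabla l(w_t)\|^2+\tfrac\eta2\|b_t\|^2$ and the second moment by $E[\|\hat g_t\|^2\mid w_t]=\|\nabla l(w_t)+b_t\|^2+E\|n_t\|^2$, decomposing $E\|n_t\|^2$ into the Monte-Carlo variance $\lesssim\beta^2\sigma_2^2 d/M$, the minibatch variance $\lesssim\sigma_1^2/(MR)$ (the extra $1/M$ because the $M$ draws also average over fresh batches), and the coreset-selection variance $\le\epsilon^2$ (absolute) or $\le\epsilon^2\|\nabla l(w_t)\|^2$ (multiplicative); using $\eta\le1/\beta$ lets $\tfrac{\beta\eta^2}{2}\|\nabla l(w_t)+b_t\|^2$ be absorbed into the cross term. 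Rearranging gives $\tfrac\eta2(1-O(\epsilon^2))\|\nabla l(w_t)\|^2\le l(w_t)-E[l(w_{t+1})\mid w_t]+\eta\|b_t\|^2+\beta\eta^2 E\|n_t\|^2$; taking total expectation, summing over $t=0,\dots,T-1$, using $l(w_T)\ge l(w^*)$, and dividing by $\eta T/2$ yields $\tfrac1T\sum_t E\|\nabla l(w_t)\|^2\lesssim\tfrac{l(w_0)-l(w^*)}{\eta T}+\beta^2\sigma_2^2 d+\beta\eta\big(\tfrac{\epsilon^2}{M}+\tfrac{\sigma_1^2}{MR}\big)$ in the absolute case. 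Plugging $\eta=\min\{1/\sqrt T,1/\beta\}$ and $\sigma_2^2 d=1/(M\sqrt T)$ makes every term carry a $1/\sqrt T$ factor and collecting them gives part (1). For part (2), the term $\epsilon^2\|\nabla l(w_t)\|^2$ is instead moved to the left-hand side (valid for small $\epsilon$), so no additive error floor survives; the remaining noise, reduced to order $1/(MR)$ by averaging over batches and samples, then feeds through the standard SGD variance-to-rate conversion to give the $1/\sqrt{MRT}$ rate of part (2).

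The delicate part is the bookkeeping of how these three noise sources compose. The two steps I expect to need the most care are: (a) proving the smoothing-bias estimate $\|\nabla l_{\sigma_2}-\nabla l\|\lesssim\beta\sqrt{\sigma_2^2 d}$ together with the matching $1/M$ Monte-Carlo variance bound, so that the coupling $\sigma_2^2 d=1/(M\sqrt T)$ produces exactly the $\beta^2/(2M)$ constant appearing in part (1); and (b) in the multiplicative-noise case, correctly classifying each error term as a genuine additive floor versus one that shrinks with $R$ and $M$, so that after absorbing $\epsilon^2\|\nabla l\|^2$ on the left the bound collapses to the $1/\sqrt{MRT}$ rate rather than the naive $1/\sqrt T$. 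A smaller but necessary detail is justifying the reduction of the $B$-step inner minibatch loop to a single relabeled SGD step under $\eta\le1/\beta$.
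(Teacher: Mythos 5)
Your proposal follows essentially the same route as the paper's proof: the $\beta$-smoothness descent lemma applied to the update $w_{t+1}=w_t-\eta\,\tfrac1M\sum_m\nabla l_S(w_t+\epsilon_m)$, a decomposition of the error into the Gaussian-smoothing bias (bounded by $\beta^2\sigma_2^2 d$ via smoothness), the Monte-Carlo/minibatch variance $\tfrac1M(\tfrac{\sigma_1^2}{R}+\epsilon^2)$, and the coreset noise, followed by telescoping with the same choices $\sigma_2^2 d=\tfrac{1}{M\sqrt T}$ and, in the multiplicative case, absorbing $\epsilon^2\|\nabla l(w_t)\|^2$ into the left-hand side with a step size scaling as $\sqrt{MR/T}$. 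The only differences are cosmetic bookkeeping (your explicit bias/residual split $b_t,n_t$ and the relabeling of the inner minibatch loop, which the paper simply elides), so the argument matches.
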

\textbf{Discussion} Theorem~\ref{thm:convergence} studies the tradeoff between noise levels and corresponding convergence rates. In case 1 of relatively larger absolute noise, we improve certain terms over previously known standard SGD rates by a factor of $M$, though we have an additional term due to the noise injected into the model weights. These trade-off between different forms of noise actually give us some room for engineering the overall training process. For example, we could avoid the additional term by switching to the naive SGD when close to end of the training. Another observation is that the trade-off will be more favorable towards our method when the random sampling noise $\sigma_1$ becomes large which can happen in the corrupt learning situation (data corruption) making our method more stable in such a situation. In the second case of smaller (proportional to the gradient magnitude) noise, we improve the convergence rate from $\mathcal{O}(\frac{1}{\sqrt{RT}})$~\citep{yang2023sustainablelearningcoresetsdataefficient} to $\mathcal{O}(\frac{1}{\sqrt{MRT}})$. 

\textbf{Selective Sampling and Stability} 
Batch Normalization (BN) and related normalization techniques are central to the training and generalization of deep learning models, influencing both the loss landscape and optimization behavior \citep{santurkar2019doesbatchnormalizationhelp, sun2020testtimetrainingselfsupervisiongeneralization, wang2021tentfullytesttimeadaptation}. Recent studies also emphasize the role of noise in normalization layers \citep{kosson2023ghostnoiseregularizingdeep, liang2019instanceenhancementbatchnormalization}, underscoring their importance in stabilizing learning.

Our method involves sampling around model weights to capture curvature information without explicitly computing the Hessian. However, full-model sampling is computationally expensive, and larger variance (e.g., $\sigma_2^2 d$) leads to smoother loss surfaces but slower convergence (see Theorem~\ref{thm:convergence}). To balance efficiency and stability, we restrict sampling to the batch normalization layers. This choice is supported by recent findings that highlight the unique role of BN layers in controlling sharpness and enabling efficient optimization \citep{mueller2023normalizationlayerssharpnessawareminimization, frankle2021trainingbatchnormbatchnormexpressive, xu2019understandingimprovinglayernormalization}. While this design is empirically motivated, developing a full theoretical understanding of BN’s stabilizing effect remains an open question.

\begin{figure*}[t]
    \centering
    \begin{minipage}{0.24\textwidth} 
        \centering
        \includegraphics[width=\textwidth]{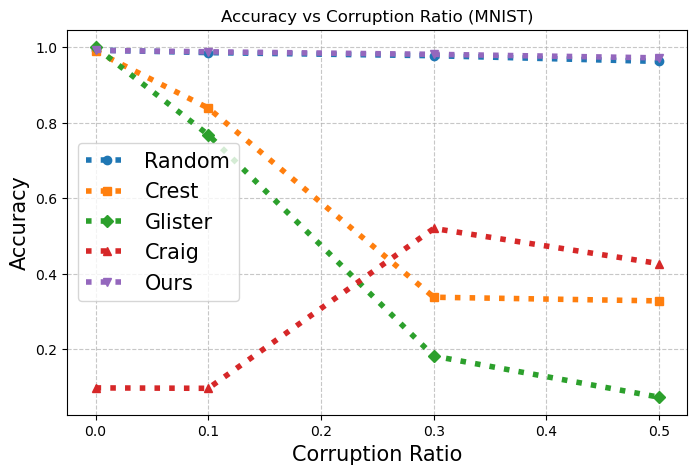}
    \end{minipage} \hfill
    \begin{minipage}{0.24\textwidth}
        \centering
        \includegraphics[width=\textwidth]{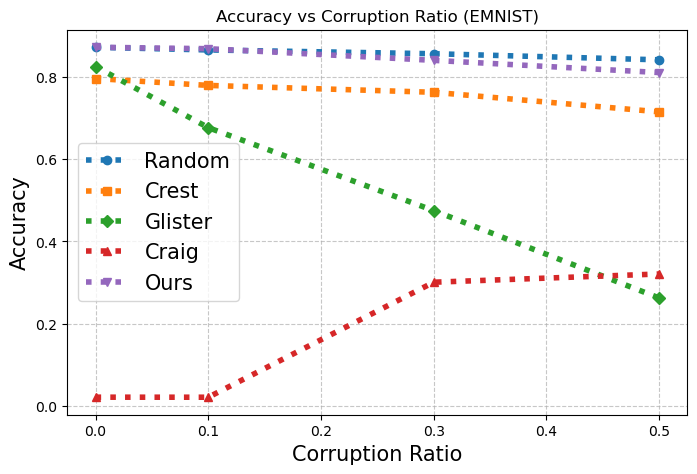}
    \end{minipage} \hfill
    \begin{minipage}{0.24\textwidth}
        \centering
        \includegraphics[width=\textwidth]{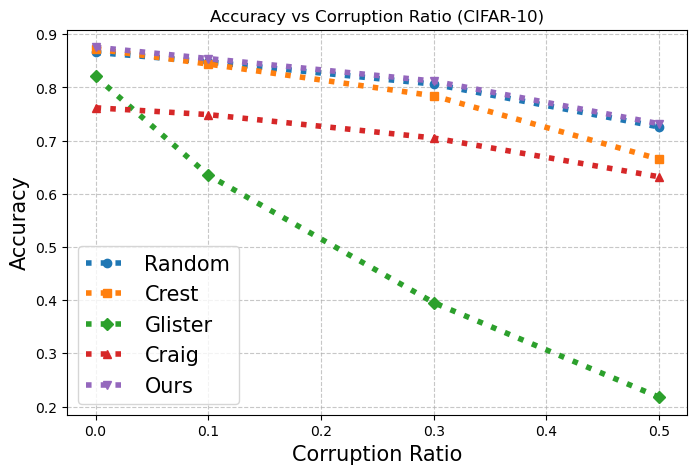}
    \end{minipage} \hfill
    \begin{minipage}{0.24\textwidth}
        \centering
        \includegraphics[width=\textwidth]{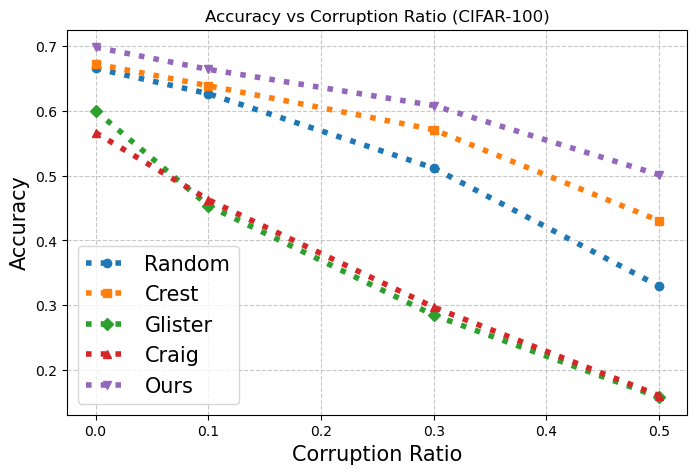}
    \end{minipage}
    \caption{\textbf{Strong against all baseline:} Performance at different corrupt ratio with respect to MNIST, EMINST, CIFAR-10 and CIFAR-100. Our method (purple line) consistantly outperform others method at different corrupt ratio, and the performance drop is less when raising the corrupt ratio. Note that the Random sampling method still suffer sharp drop in the high corrupt ratio region in CIFAR-100 dataset.}
    \label{fig:corrupt ratio}
\end{figure*}
\section{Experiments: When Does Posterior Sampling Help Most}
\label{sec:experiments}

To evaluate our method. We test our method on ResNet models (ResNet20, ResNet18 and ResNet 50) and transformer based models (ViT, RoBERTa and ELECTRA-small-discriminator) with vision datasets (MNIST, EMNIST, CIFAR10, CIFAR100, TinyImagenet, Imagenet) and language datasets (SNLI, REC-50). It achieves state-of-the-art accuracy with improved time and memory efficiency, and demonstrates greater robustness to label noise than baselines.\footnote{For the comparison with \cite{pooladzandi2022adaptivesecondordercoresets}, see Appendix~\ref{adacore}} All results are averaged over three random seeds for reproducibility. To reduce compute overhead, we apply Gaussian sampling only to batch normalization layers. Each experiment uses 4 sampled models, with $\sigma$ selected via cross-validation from ${0.1, 0.01, 0.001}$. Sampling is performed dynamically during forward passes, eliminating the need for separate model copies and adding at most one extra layer’s memory overhead. This makes our method scalable to larger networks. (See Appendix~\ref{experiments details} for full setup details and Appendix~\ref{more setting} for more experiment setting such as training budget and archtectures.)





\textbf{Gradient Matching: Importance of loss landscape.} As noted by~\citep{yang2023sustainablelearningcoresetsdataefficient, shin2023losscurvaturematchingdatasetselection}, naive optimization of the coreset function~\eqref{eq:originalcoresetproblem} to match the gradients without accounting for the loss landscape as done by Craig~\citep{mirzasoleiman2020coresetsdataefficienttrainingmachine} may not actually yield the the best quality subset. We verify this, and show that even on a simple dataset like MNIST, coresets outputted by Craig as vastly inferior to the ones we generate in matching the gradient of the full dataset. The results are presented in Figure~\ref{fig:lossLandscape side by side}. 

\begin{table}[ht]
\centering
\scalebox{0.65}{
\begin{tabular}{ccccc}
\toprule
\textbf{Dataset} & \textbf{Corruption} & \textbf{Our Method} & \textbf{Random} & \textbf{Crest} \\
\midrule
\multirow{4}{*}{SNLI} 
    & 0.0 & \textbf{0.9132±0.0013} & 0.9046±0.0020 & 0.9098±0.0022 \\
    & 0.1 & \textbf{0.8664±0.0012} & 0.8324±0.0054 & 0.8254±0.0028 \\
    & 0.3 & \textbf{0.7841±0.0021} & 0.7529±0.0031 & 0.7587±0.0042 \\
    & 0.5 & \textbf{0.6062±0.0016} & 0.5316±0.0024 & 0.5104±0.0055 \\
\midrule
\multirow{4}{*}{TinyImageNet} 
    & 0.0 & \textbf{0.5732±0.0011} & 0.5520±0.0094 & 0.5609±0.0040 \\
    & 0.1 & \textbf{0.5526±0.0041} & 0.5176±0.0057 & 0.5150±0.0641 \\
    & 0.3 & \textbf{0.4832±0.0043} & 0.4193±0.0063 & 0.4760±0.0061 \\
    & 0.5 & \textbf{0.3644±0.0058} & 0.2857±0.0137 & 0.3567±0.0069 \\
\midrule
\multirow{4}{*}{ImageNet-1k}
    & 0.0 & 0.7091±0.0004  & 0.7074±0.0004 & \textbf{0.7136±0.0015} \\ 
    & 0.1 & \textbf{0.6977±0.0016} & 0.6905±0.0016 & 0.6946±0.0035 \\ 
    & 0.3 & \textbf{0.6837±0.0007} & 0.6514±0.0001 & 0.6606±0.0024 \\ 
    & 0.5 & \textbf{0.6388±0.0008} & 0.5939±0.0017 & 0.6051±0.0009 \\ 
\bottomrule
\end{tabular}
}
\caption{\textbf{Large scale experiment:} Test accuracy under varying corruption levels on SNLI, TinyImageNet, and ImageNet-1k. Our method consistently outperforms Random and Crest. We did not compare to Craig and Glister on TinyImagent as we run into Out-Of-Memory errors while using code base mentioned in Appendix \ref{experiments details}. Note that despite Crest marginally outperform our method in Imagenet-1k at zero corruption, the time taken is double of our method (Crest: 42 hours, Ours: 20 hours) along with triple memory consumption.}
\label{tab:tinyimagenet}
\end{table}

\textbf{The role of sampling in batch normalization.} Our decision to conduct sampling on BN layers is motivated by both computational efficiency and their fundamental role in shaping model behavior. Empirical studies show that sampling on various layers improves performance, but BN layers consistently yield the best results across different models and datasets. Given this trade-off between computational cost and effectiveness, we focus on BN layer sampling to demonstrate the robustness of our method. The table \ref{tab:layer_comparison} (Left) is performance comparison between sampling at different layers of ResNet20 for CIFAR-10 dataset.
\begin{table*}[t]
\noindent
\makebox[0.55\textwidth][l]{%
\scalebox{0.55}{%
\begin{tabular}{@{}lcccc@{}}
    \toprule
    \textbf{Layer} & \textbf{0.0} & \textbf{0.1} & \textbf{0.3} & \textbf{0.5} \\
    \midrule
    All & 0.8654 ± 0.0094 & 0.8479 ± 0.0081 & 0.8079 ± 0.0084 & 0.7288 ± 0.0140 \\
    BN  & \textbf{0.8757 ± 0.0029} & \textbf{0.8544 ± 0.0034} & \textbf{0.8120 ± 0.0058} & \textbf{0.7318 ± 0.0143} \\
    FC  & 0.8705 ± 0.0019 & 0.8525 ± 0.0033 & 0.8099 ± 0.0043 & 0.7232 ± 0.0171 \\
    \bottomrule
\end{tabular}%
}%
}
\makebox[0.45\textwidth][r]{%
\scalebox{0.6}{%
\begin{tabular}{@{}llccc@{}}
    \toprule
    \textbf{Dataset} & \textbf{Corrupt} & \textbf{Our} & \textbf{Random} & \textbf{Crest} \\
    & \textbf{Ratio} & \textbf{Method} & & \\
    \midrule
    \multirow{4}{*}{CIFAR-10}
    & 0.6 & \textbf{0.6704 ± 0.0063} & 0.6422 ± 0.0185 & 0.3374 ± 0.0912 \\
    & 0.7 & \textbf{0.5511 ± 0.0221} & 0.5221 ± 0.0341 & 0.2853 ± 0.0328 \\
    & 0.8 & \textbf{0.3701 ± 0.0053} & 0.3011 ± 0.0084 & 0.1520 ± 0.0276 \\
    & 0.9 & \textbf{0.0953 ± 0.0183} & 0.0950 ± 0.0286 & 0.0938 ± 0.0108 \\
    \midrule
    \multirow{4}{*}{CIFAR-100}
    & 0.6 & \textbf{0.3911 ± 0.0063} & 0.2528 ± 0.0131 & 0.3306 ± 0.0054 \\
    & 0.7 & \textbf{0.2810 ± 0.0048} & 0.1504 ± 0.0036 & 0.2442 ± 0.0028 \\
    & 0.8 & \textbf{0.1680 ± 0.0107} & 0.0863 ± 0.0211 & 0.1613 ± 0.0084 \\
    & 0.9 & \textbf{0.0761 ± 0.0029} & 0.0367 ± 0.0087 & 0.0707 ± 0.0061 \\
    \bottomrule
\end{tabular}%
}%
}

\caption{\textbf{(Left) Sampling at different layers:} CIFAR-10 test accuracy under varying corruption ratios. BN: BatchNorm; FC: Fully Connected; All: All parameters perturbed. \textbf{(Right) Robust at even higher corruption:} Performance comparison on CIFAR-10 and CIFAR-100 under corruption ratios from 0.6 to 0.9. Our method consistently outperforms Random selection and Crest.}
\label{tab:layer_comparison}
\end{table*}

\textbf{Test performance: Robustness and Accuracy.} 
We perform an extensive study across different datasets and label corruption ratios to show the advantage of our method. We summarize the numerical results in Figure~\ref{fig:corrupt ratio}. For raw numbers, see Table \ref{tab:big table} in the appendix. We can observe that our method outperforms established benchmarks across several different settings, especially when the label corruption ratio is high.  Most methods suffer a drastic performance drop when the corrupt ratio is increased due to the stability. Naive SGD has been shown to be a strong baseline \cite{okanovic2023repeatedrandomsamplingminimizing} with little overhead, and our experiments confirm its effectiveness over several traditional baselines. However, its performance also deteriorates significantly with increasing corruption. We attribute this to increased stability of our method compared to other methods.  In fact, we also observed that some methods can fail the training as the instability arising out of coreset selections for noisy data can cause the loss to go infinitely large (see caption in Table \ref{tab:big table}). To further demonstrate the advantage of our method, we conduct an experiment on even higher corrupt ratio (0.6 - 0.9) in table \ref{tab:layer_comparison} (Right) and find that our method offers strong performance compared to others. Finally, many methods in coreset selection could not scale to large datasets such as SNLI, TinyImagenet and Imagenet-1k (see Table \ref{tab:tinyimagenet}).

\textbf{Training dynamics: Memory Footprint and Time-to-Accuracy.} As shown in Figure~\ref{fig:grad_mem_combined}, it requires significantly less memory than Crest which performs $2nd$ best accuracy-wise after our method in predictive performance on non-corrupted data. Crest consumes up to three times more memory than full training due to tracking additional statistics(their algorithm maintains Hessian and accumulated gradient steps for adaptive coreset updates). While their strategy reduces re-selection frequency for coresets, the computational overhead often negates its theoretical potential benefits (Figure \ref{fig:time accuracy side by side}) and lead to slower training. Our method consistently outperforms Crest across various settings, from small-scale models (MNIST + LeNet) to large-scale architectures (TinyImagenet / Imagenet / SNLI + ResNet50 / RoBERTa).

\textbf{Design of more effective posteriors} The choice of posterior can influence the tradeoff between fidelity to the loss landscape and stability. We test with different posteriors - (a) the spherical Gaussian prior with a fixed hyper-parameter $\sigma$, Gaussian with the Hessian-inverse as the covariance matrix, and an Ensemble of models with $4$ different random seed with coresets selected as per Algorithm~\ref{algo}. For Ensemble, we average the gradients of the final layer and use it for coreset calculation. As shown in Table~\ref{tab:differentposterior}, the Spherical-Gaussian posterior achieves the highest final accuracy across various scenarios. We analyze potential failure modes associated with different posteriors, with Figure \ref{fig: failure mode} (in Appendix) illustrating key failure reasons. For CIFAR-10 we observe that Hessian-Gaussian, Ensemble, and Gaussian posteriors require 5962.6s, 3454.2s (parallel), and 2073.5s, respectively, compared to 3553.5s for full training. Notably, only the Spherical-Gaussian posterior offers a speed-up. While training multiple models with different random seeds can be parallelized, the increased memory overhead outweighs the benefits, making it a less favorable trade-off.

\begin{table}[ht]
\centering
\scalebox{0.9}{
\begin{tabular}{lcccc}
\toprule
\textbf{Dataset} & \textbf{Corrupt Ratio} & \textbf{Hessian-Gaussian} & \textbf{Ensemble} & \textbf{Spherical-Gaussian} \\
\midrule
\multirow{4}{*}{CIFAR-10}
  & 0.0 & 0.8721 ± 0.0035 & 0.8738 ± 0.0010 & \textbf{0.8757 ± 0.0029} \\
  & 0.1 & 0.8523 ± 0.0022 & 0.8506 ± 0.0010 & \textbf{0.8544 ± 0.0034} \\
  & 0.3 & 0.8041 ± 0.0240 & 0.8036 ± 0.0010 & \textbf{0.8120 ± 0.0058} \\
  & 0.5 & 0.5870 ± 0.0156 & 0.7216 ± 0.0030 & \textbf{0.7318 ± 0.0143} \\
\midrule
\multirow{4}{*}{CIFAR-100}
  & 0.0 & 0.6841 ± 0.0045 & 0.6968 ± 0.0070 & \textbf{0.6986 ± 0.0025} \\
  & 0.1 & 0.6438 ± 0.0043 & 0.6618 ± 0.0030 & \textbf{0.6644 ± 0.0034} \\
  & 0.3 & 0.5731 ± 0.0032 & 0.6066 ± 0.0020 & \textbf{0.6085 ± 0.0044} \\
  & 0.5 & 0.4340 ± 0.0051 & 0.4965 ± 0.0060 & \textbf{0.5014 ± 0.0068} \\
\bottomrule
\end{tabular}
}
\caption{\textbf{Sampling with different posterior:} Comparison of posterior sampling methods under different corruption ratios. Bold values indicate best performance. We can observe that for method required accurate Hessian estimation, the performance drop sharper compared to the others.}
\label{tab:differentposterior}
\end{table}


\section{Future Directions and limitation}

The limitation of work lies in that for more general data type like sound or video, the generalization is not guarantee as those data type may impose even harder tasks for the models which is beyond the scope and guarantee in this work.

\section*{Acknowledgments}
We thank the Central Indiana Corporate Partnership AnalytiXIN Initiative for their support.

\clearpage

\bibliography{bib/weikai}
\bibliographystyle{abbrvnat}
\clearpage
\section*{NeurIPS Paper Checklist}

The checklist is designed to encourage best practices for responsible machine learning research, addressing issues of reproducibility, transparency, research ethics, and societal impact. Do not remove the checklist: {\bf The papers not including the checklist will be desk rejected.} The checklist should follow the references and follow the (optional) supplemental material.  The checklist does NOT count towards the page
limit. 

Please read the checklist guidelines carefully for information on how to answer these questions. For each question in the checklist:
\begin{itemize}
    \item You should answer \answerYes{}, \answerNo{}, or \answerNA{}.
    \item \answerNA{} means either that the question is Not Applicable for that particular paper or the relevant information is Not Available.
    \item Please provide a short (1–2 sentence) justification right after your answer (even for NA). 
\end{itemize}

{\bf The checklist answers are an integral part of your paper submission.} They are visible to the reviewers, area chairs, senior area chairs, and ethics reviewers. You will be asked to also include it (after eventual revisions) with the final version of your paper, and its final version will be published with the paper.

The reviewers of your paper will be asked to use the checklist as one of the factors in their evaluation. While "\answerYes{}" is generally preferable to "\answerNo{}", it is perfectly acceptable to answer "\answerNo{}" provided a proper justification is given (e.g., "error bars are not reported because it would be too computationally expensive" or "we were unable to find the license for the dataset we used"). In general, answering "\answerNo{}" or "\answerNA{}" is not grounds for rejection. While the questions are phrased in a binary way, we acknowledge that the true answer is often more nuanced, so please just use your best judgment and write a justification to elaborate. All supporting evidence can appear either in the main paper or the supplemental material, provided in appendix. If you answer \answerYes{} to a question, in the justification please point to the section(s) where related material for the question can be found.

IMPORTANT, please:
\begin{itemize}
    \item {\bf Delete this instruction block, but keep the section heading ``NeurIPS Paper Checklist"},
    \item  {\bf Keep the checklist subsection headings, questions/answers and guidelines below.}
    \item {\bf Do not modify the questions and only use the provided macros for your answers}.
\end{itemize}


\begin{enumerate}

\item {\bf Claims}
    \item[] Question: Do the main claims made in the abstract and introduction accurately reflect the paper's contributions and scope?
    \item[] Answer: \answerYes{} 
    \item[] Justification: \textbf{see main context and abstract}
    \item[] Guidelines:
    \begin{itemize}
        \item The answer NA means that the abstract and introduction do not include the claims made in the paper.
        \item The abstract and/or introduction should clearly state the claims made, including the contributions made in the paper and important assumptions and limitations. A No or NA answer to this question will not be perceived well by the reviewers. 
        \item The claims made should match theoretical and experimental results, and reflect how much the results can be expected to generalize to other settings. 
        \item It is fine to include aspirational goals as motivation as long as it is clear that these goals are not attained by the paper. 
    \end{itemize}

\item {\bf Limitations}
    \item[] Question: Does the paper discuss the limitations of the work performed by the authors?
    \item[] Answer: \answerYes{} 
    \item[] Justification: \textbf{see experiment and limitation at the end}
    \item[] Guidelines:
    \begin{itemize}
        \item The answer NA means that the paper has no limitation while the answer No means that the paper has limitations, but those are not discussed in the paper. 
        \item The authors are encouraged to create a separate "Limitations" section in their paper.
        \item The paper should point out any strong assumptions and how robust the results are to violations of these assumptions (e.g., independence assumptions, noiseless settings, model well-specification, asymptotic approximations only holding locally). The authors should reflect on how these assumptions might be violated in practice and what the implications would be.
        \item The authors should reflect on the scope of the claims made, e.g., if the approach was only tested on a few datasets or with a few runs. In general, empirical results often depend on implicit assumptions, which should be articulated.
        \item The authors should reflect on the factors that influence the performance of the approach. For example, a facial recognition algorithm may perform poorly when image resolution is low or images are taken in low lighting. Or a speech-to-text system might not be used reliably to provide closed captions for online lectures because it fails to handle technical jargon.
        \item The authors should discuss the computational efficiency of the proposed algorithms and how they scale with dataset size.
        \item If applicable, the authors should discuss possible limitations of their approach to address problems of privacy and fairness.
        \item While the authors might fear that complete honesty about limitations might be used by reviewers as grounds for rejection, a worse outcome might be that reviewers discover limitations that aren't acknowledged in the paper. The authors should use their best judgment and recognize that individual actions in favor of transparency play an important role in developing norms that preserve the integrity of the community. Reviewers will be specifically instructed to not penalize honesty concerning limitations.
    \end{itemize}

\item {\bf Theory assumptions and proofs}
    \item[] Question: For each theoretical result, does the paper provide the full set of assumptions and a complete (and correct) proof?
    \item[] Answer: \answerYes{} 
    \item[] Justification: \textbf{See proof in appendix \ref{second order proof} and appendix \ref{proof:convergence} and theory section}
    \item[] Guidelines:
    \begin{itemize}
        \item The answer NA means that the paper does not include theoretical results. 
        \item All the theorems, formulas, and proofs in the paper should be numbered and cross-referenced.
        \item All assumptions should be clearly stated or referenced in the statement of any theorems.
        \item The proofs can either appear in the main paper or the supplemental material, but if they appear in the supplemental material, the authors are encouraged to provide a short proof sketch to provide intuition. 
        \item Inversely, any informal proof provided in the core of the paper should be complemented by formal proofs provided in appendix or supplemental material.
        \item Theorems and Lemmas that the proof relies upon should be properly referenced. 
    \end{itemize}

    \item {\bf Experimental result reproducibility}
    \item[] Question: Does the paper fully disclose all the information needed to reproduce the main experimental results of the paper to the extent that it affects the main claims and/or conclusions of the paper (regardless of whether the code and data are provided or not)?
    \item[] Answer: \answerYes{} 
    \item[] Justification: \textbf{See experiment section and algorithm and details in \ref{experiments details}}
    \item[] Guidelines:
    \begin{itemize}
        \item The answer NA means that the paper does not include experiments.
        \item If the paper includes experiments, a No answer to this question will not be perceived well by the reviewers: Making the paper reproducible is important, regardless of whether the code and data are provided or not.
        \item If the contribution is a dataset and/or model, the authors should describe the steps taken to make their results reproducible or verifiable. 
        \item Depending on the contribution, reproducibility can be accomplished in various ways. For example, if the contribution is a novel architecture, describing the architecture fully might suffice, or if the contribution is a specific model and empirical evaluation, it may be necessary to either make it possible for others to replicate the model with the same dataset, or provide access to the model. In general. releasing code and data is often one good way to accomplish this, but reproducibility can also be provided via detailed instructions for how to replicate the results, access to a hosted model (e.g., in the case of a large language model), releasing of a model checkpoint, or other means that are appropriate to the research performed.
        \item While NeurIPS does not require releasing code, the conference does require all submissions to provide some reasonable avenue for reproducibility, which may depend on the nature of the contribution. For example
        \begin{enumerate}
            \item If the contribution is primarily a new algorithm, the paper should make it clear how to reproduce that algorithm.
            \item If the contribution is primarily a new model architecture, the paper should describe the architecture clearly and fully.
            \item If the contribution is a new model (e.g., a large language model), then there should either be a way to access this model for reproducing the results or a way to reproduce the model (e.g., with an open-source dataset or instructions for how to construct the dataset).
            \item We recognize that reproducibility may be tricky in some cases, in which case authors are welcome to describe the particular way they provide for reproducibility. In the case of closed-source models, it may be that access to the model is limited in some way (e.g., to registered users), but it should be possible for other researchers to have some path to reproducing or verifying the results.
        \end{enumerate}
    \end{itemize}

\item {\bf Open access to data and code}
    \item[] Question: Does the paper provide open access to the data and code, with sufficient instructions to faithfully reproduce the main experimental results, as described in supplemental material?
    \item[] Answer: \answerNo{} 
    \item[] Justification: \textbf{Will provide code in camera ready version once accpeted}
    \item[] Guidelines:
    \begin{itemize}
        \item The answer NA means that paper does not include experiments requiring code.
        \item Please see the NeurIPS code and data submission guidelines (\url{https://nips.cc/public/guides/CodeSubmissionPolicy}) for more details.
        \item While we encourage the release of code and data, we understand that this might not be possible, so “No” is an acceptable answer. Papers cannot be rejected simply for not including code, unless this is central to the contribution (e.g., for a new open-source benchmark).
        \item The instructions should contain the exact command and environment needed to run to reproduce the results. See the NeurIPS code and data submission guidelines (\url{https://nips.cc/public/guides/CodeSubmissionPolicy}) for more details.
        \item The authors should provide instructions on data access and preparation, including how to access the raw data, preprocessed data, intermediate data, and generated data, etc.
        \item The authors should provide scripts to reproduce all experimental results for the new proposed method and baselines. If only a subset of experiments are reproducible, they should state which ones are omitted from the script and why.
        \item At submission time, to preserve anonymity, the authors should release anonymized versions (if applicable).
        \item Providing as much information as possible in supplemental material (appended to the paper) is recommended, but including URLs to data and code is permitted.
    \end{itemize}

\item {\bf Experimental setting/details}
    \item[] Question: Does the paper specify all the training and test details (e.g., data splits, hyperparameters, how they were chosen, type of optimizer, etc.) necessary to understand the results?
    \item[] Answer: \answerYes{} 
    \item[] Justification: \textbf{See experiment details in appendix \ref{experiments details}}
    \item[] Guidelines:
    \begin{itemize}
        \item The answer NA means that the paper does not include experiments.
        \item The experimental setting should be presented in the core of the paper to a level of detail that is necessary to appreciate the results and make sense of them.
        \item The full details can be provided either with the code, in appendix, or as supplemental material.
    \end{itemize}

\item {\bf Experiment statistical significance}
    \item[] Question: Does the paper report error bars suitably and correctly defined or other appropriate information about the statistical significance of the experiments?
    \item[] Answer: \answerYes{} 
    \item[] Justification: \textbf{See experiment section and details in appendix \ref{experiments details}}
    \item[] Guidelines:
    \begin{itemize}
        \item The answer NA means that the paper does not include experiments.
        \item The authors should answer "Yes" if the results are accompanied by error bars, confidence intervals, or statistical significance tests, at least for the experiments that support the main claims of the paper.
        \item The factors of variability that the error bars are capturing should be clearly stated (for example, train/test split, initialization, random drawing of some parameter, or overall run with given experimental conditions).
        \item The method for calculating the error bars should be explained (closed form formula, call to a library function, bootstrap, etc.)
        \item The assumptions made should be given (e.g., Normally distributed errors).
        \item It should be clear whether the error bar is the standard deviation or the standard error of the mean.
        \item It is OK to report 1-sigma error bars, but one should state it. The authors should preferably report a 2-sigma error bar than state that they have a 96\% CI, if the hypothesis of Normality of errors is not verified.
        \item For asymmetric distributions, the authors should be careful not to show in tables or figures symmetric error bars that would yield results that are out of range (e.g. negative error rates).
        \item If error bars are reported in tables or plots, The authors should explain in the text how they were calculated and reference the corresponding figures or tables in the text.
    \end{itemize}

\item {\bf Experiments compute resources}
    \item[] Question: For each experiment, does the paper provide sufficient information on the computer resources (type of compute workers, memory, time of execution) needed to reproduce the experiments?
    \item[] Answer: \answerYes{} 
    \item[] Justification: \textbf{See experiment section in main content and details in section \ref{experiments details}}
    \item[] Guidelines:
    \begin{itemize}
        \item The answer NA means that the paper does not include experiments.
        \item The paper should indicate the type of compute workers CPU or GPU, internal cluster, or cloud provider, including relevant memory and storage.
        \item The paper should provide the amount of compute required for each of the individual experimental runs as well as estimate the total compute. 
        \item The paper should disclose whether the full research project required more compute than the experiments reported in the paper (e.g., preliminary or failed experiments that didn't make it into the paper). 
    \end{itemize}
    
\item {\bf Code of ethics}
    \item[] Question: Does the research conducted in the paper conform, in every respect, with the NeurIPS Code of Ethics \url{https://neurips.cc/public/EthicsGuidelines}?
    \item[] Answer: \answerYes{} 
    \item[] Justification: \textbf{See details in appendix \ref{experiments details}}
    \item[] Guidelines:
    \begin{itemize}
        \item The answer NA means that the authors have not reviewed the NeurIPS Code of Ethics.
        \item If the authors answer No, they should explain the special circumstances that require a deviation from the Code of Ethics.
        \item The authors should make sure to preserve anonymity (e.g., if there is a special consideration due to laws or regulations in their jurisdiction).
    \end{itemize}

\item {\bf Broader impacts}
    \item[] Question: Does the paper discuss both potential positive societal impacts and negative societal impacts of the work performed?
    \item[] Answer: \answerNA{} 
    \item[] Justification: 
    \item[] Guidelines: 
    \begin{itemize}
        \item The answer NA means that there is no societal impact of the work performed.
        \item If the authors answer NA or No, they should explain why their work has no societal impact or why the paper does not address societal impact.
        \item Examples of negative societal impacts include potential malicious or unintended uses (e.g., disinformation, generating fake profiles, surveillance), fairness considerations (e.g., deployment of technologies that could make decisions that unfairly impact specific groups), privacy considerations, and security considerations.
        \item The conference expects that many papers will be foundational research and not tied to particular applications, let alone deployments. However, if there is a direct path to any negative applications, the authors should point it out. For example, it is legitimate to point out that an improvement in the quality of generative models could be used to generate deepfakes for disinformation. On the other hand, it is not needed to point out that a generic algorithm for optimizing neural networks could enable people to train models that generate Deepfakes faster.
        \item The authors should consider possible harms that could arise when the technology is being used as intended and functioning correctly, harms that could arise when the technology is being used as intended but gives incorrect results, and harms following from (intentional or unintentional) misuse of the technology.
        \item If there are negative societal impacts, the authors could also discuss possible mitigation strategies (e.g., gated release of models, providing defenses in addition to attacks, mechanisms for monitoring misuse, mechanisms to monitor how a system learns from feedback over time, improving the efficiency and accessibility of ML).
    \end{itemize}
    
\item {\bf Safeguards}
    \item[] Question: Does the paper describe safeguards that have been put in place for responsible release of data or models that have a high risk for misuse (e.g., pretrained language models, image generators, or scraped datasets)?
    \item[] Answer: \answerNA{} 
    \item[] Justification:
    \item[] Guidelines:
    \begin{itemize}
        \item The answer NA means that the paper poses no such risks.
        \item Released models that have a high risk for misuse or dual-use should be released with necessary safeguards to allow for controlled use of the model, for example by requiring that users adhere to usage guidelines or restrictions to access the model or implementing safety filters. 
        \item Datasets that have been scraped from the Internet could pose safety risks. The authors should describe how they avoided releasing unsafe images.
        \item We recognize that providing effective safeguards is challenging, and many papers do not require this, but we encourage authors to take this into account and make a best faith effort.
    \end{itemize}

\item {\bf Licenses for existing assets}
    \item[] Question: Are the creators or original owners of assets (e.g., code, data, models), used in the paper, properly credited and are the license and terms of use explicitly mentioned and properly respected?
    \item[] Answer: \answerYes{} 
    \item[] Justification: \textbf{See experiment section in main context and detail in appendix \ref{experiments details} and related work}
    \item[] Guidelines:
    \begin{itemize}
        \item The answer NA means that the paper does not use existing assets.
        \item The authors should cite the original paper that produced the code package or dataset.
        \item The authors should state which version of the asset is used and, if possible, include a URL.
        \item The name of the license (e.g., CC-BY 4.0) should be included for each asset.
        \item For scraped data from a particular source (e.g., website), the copyright and terms of service of that source should be provided.
        \item If assets are released, the license, copyright information, and terms of use in the package should be provided. For popular datasets, \url{paperswithcode.com/datasets} has curated licenses for some datasets. Their licensing guide can help determine the license of a dataset.
        \item For existing datasets that are re-packaged, both the original license and the license of the derived asset (if it has changed) should be provided.
        \item If this information is not available online, the authors are encouraged to reach out to the asset's creators.
    \end{itemize}

\item {\bf New assets}
    \item[] Question: Are new assets introduced in the paper well documented and is the documentation provided alongside the assets?
    \item[] Answer: \answerNA{} 
    \item[] Justification: \textbf{Will complete if accepted and release in camera ready version}
    \item[] Guidelines:
    \begin{itemize}
        \item The answer NA means that the paper does not release new assets.
        \item Researchers should communicate the details of the dataset/code/model as part of their submissions via structured templates. This includes details about training, license, limitations, etc. 
        \item The paper should discuss whether and how consent was obtained from people whose asset is used.
        \item At submission time, remember to anonymize your assets (if applicable). You can either create an anonymized URL or include an anonymized zip file.
    \end{itemize}

\item {\bf Crowdsourcing and research with human subjects}
    \item[] Question: For crowdsourcing experiments and research with human subjects, does the paper include the full text of instructions given to participants and screenshots, if applicable, as well as details about compensation (if any)? 
    \item[] Answer: \answerNA{} 
    \item[] Justification:
    \item[] Guidelines:
    \begin{itemize}
        \item The answer NA means that the paper does not involve crowdsourcing nor research with human subjects.
        \item Including this information in the supplemental material is fine, but if the main contribution of the paper involves human subjects, then as much detail as possible should be included in the main paper. 
        \item According to the NeurIPS Code of Ethics, workers involved in data collection, curation, or other labor should be paid at least the minimum wage in the country of the data collector. 
    \end{itemize}

\item {\bf Institutional review board (IRB) approvals or equivalent for research with human subjects}
    \item[] Question: Does the paper describe potential risks incurred by study participants, whether such risks were disclosed to the subjects, and whether Institutional Review Board (IRB) approvals (or an equivalent approval/review based on the requirements of your country or institution) were obtained?
    \item[] Answer: \answerNA{} 
    \item[] Justification:
    \item[] Guidelines:
    \begin{itemize}
        \item The answer NA means that the paper does not involve crowdsourcing nor research with human subjects.
        \item Depending on the country in which research is conducted, IRB approval (or equivalent) may be required for any human subjects research. If you obtained IRB approval, you should clearly state this in the paper. 
        \item We recognize that the procedures for this may vary significantly between institutions and locations, and we expect authors to adhere to the NeurIPS Code of Ethics and the guidelines for their institution. 
        \item For initial submissions, do not include any information that would break anonymity (if applicable), such as the institution conducting the review.
    \end{itemize}

\item {\bf Declaration of LLM usage}
    \item[] Question: Does the paper describe the usage of LLMs if it is an important, original, or non-standard component of the core methods in this research? Note that if the LLM is used only for writing, editing, or formatting purposes and does not impact the core methodology, scientific rigorousness, or originality of the research, declaration is not required.
    \item[] Answer: \answerNA{} 
    \item[] Justification:
    \item[] Guidelines:
    \begin{itemize}
        \item The answer NA means that the core method development in this research does not involve LLMs as any important, original, or non-standard components.
        \item Please refer to our LLM policy (\url{https://neurips.cc/Conferences/2025/LLM}) for what should or should not be described.
    \end{itemize}

\end{enumerate}
\clearpage
\clearpage
\appendix
\onecolumn
\section{Appendix}
\subsection{Related work}
\label{Related work}

\textbf{Coreset methods} To facilitate the training of deep learning models, various methods have been proposed to select informative or representative samples based on the uncertainty of models toward these samples such as \cite{sachdeva2021svpcfselectionproxycollaborative} and \cite{coleman2020selectionproxyefficientdata}. Another line of work selects samples based on their loss difference or degree of error. Methods such as Forgetting Events \cite{toneva2019empiricalstudyexampleforgetting}, GraNd \cite{paul2023deeplearningdatadiet} have employed this strategy to prioritize samples. These methods aim to reduce the variance of the gradient, thereby improving efficiency of reducing the loss. Another line of work use Bayesian perspective to perform coreset selection \cite{zhang2021bayesiancoresetsrevisitingnonconvex, guha2021distribution} which aims to select subset that is equally representative across assigned posteriors.

Additionally, some methods emphasize the centrality of features or embeddings, forming subsets that best represent clusters of samples. Examples include Herding \cite{chen2012supersampleskernelherding}, K-Center Greedy \cite{ding2019greedystrategyworkskcenter}, and Prototypes \cite{sorscher2023neuralscalinglawsbeating}. Another category of methods bases their selection strategy on observations from a validation set \cite{killamsetty2021glistergeneralizationbaseddata}, leveraging additional information to identify the samples most beneficial for training.

Despite lacking rigorous theoretical proof for the advantages of selection strategies, these approaches have demonstrated empirical improvements in speed or performance. Gradient-based methods \cite{mirzasoleiman2020coresetsdataefficienttrainingmachine, killamsetty2021gradmatchgradientmatchingbased, pooladzandi2022adaptivesecondordercoresets} on the other hand, select samples that best approximate the gradient of the entire dataset (training or validation sets). These methods offer theoretically sound support and provide guarantees for the training process. Additionally, several works \cite{zhang2021bayesiancoresetsrevisitingnonconvex, pooladzandi2022adaptivesecondordercoresets, shin2023losscurvaturematchingdatasetselection} in this direction propose to further match the subset with the loss landscape using information in Hessian matrix and show better convergent result and generalization performance. Our method builds on this direction and addresses the shortcomings of previous work.

\textbf{Smoothness and Stability} The optimization of deep learning models has been active area of studied to understand the reasons behind their superior performance in practice. For instance, the Random Weight Perturbation (RWP) algorithm has been shown to smooth the objective function \citep{bisla2022lowpassfilteringsgdrecovering, li2024revisiting} and improve generalization error \citep{zhou2019understandingimportancenoisetraining, jin2019nonconvexoptimizationmachinelearning, wang2022generalizationmodelstrainedsgd} despite its simplicity. By perturbing model weights, several studies have demonstrated an increased ability to escape minima with poor generalization and enhance the stability of the optimization process. Our work draws connection to this approach. Instead of explicitly optimizing using a smoothed objective, we select samples that capture similar features, thereby achieving comparable effects. This novel perspective enables us to leverage the benefits of smoothness without directly modifying the optimization objective.

\textbf{Bayesian Methods in Deep Learning}
Various parameter distributions have been proposed in deep learning to address tasks such as uncertainty estimation, out-of-distribution detection, and classification. Markov chain Monte Carlo (MCMC) methods \cite{chen2014stochasticgradienthamiltonianmonte, 10.5555/3104482.3104568} leverage gradient information for inference, while Laplace approximation-based approaches \cite{MacKay1992APB, Kirkpatrick_2017, Ritter2018ASL} employ Gaussian distributions with the Fisher information matrix or Hessian as the covariance matrix. Other methods explore different strategies: \cite{maddox2019simplebaselinebayesianuncertainty} average models across different time steps, and \cite{fort2020deepensembleslosslandscape} utilize models trained independently with different random seeds. In our work, we evaluate models derived from various posteriors to analyze the trade-offs associated with different sampling strategies.

\subsection{Second order proof}
\label{second order proof}
\begin{theorem} Suppose a subset $S'\subset S$ is  $\{\sigma,\epsilon, \Bar{w}\}$-stable and let the Hessian difference be $H_{S',\Bar{w}} - H_{S,\Bar{w}} =: \mathcal{E}$, and model has d parameters then,

(1) The gradient at difference of subset at $w$ is upper bounded
\begin{equation}
\begin{split}
    \|\nabla l_S(\Bar{w}) - \nabla l_{S'}(\Bar{w})\| \leq \frac{1}{2}(c_1\sigma^2d + \sqrt{c_1^2\sigma^4d^2 + 4\epsilon}) = \mathcal{O}(\epsilon^\frac{1}{2})
\end{split}
\end{equation}

(1)  The Hessian difference matrix $\mathcal{E}$ satisfies: \begin{equation}
\begin{split}
    \|\mathcal{E}\| \leq c_1 \sigma d + \frac{1}{\sigma} \sqrt{c_1^2 \sigma^4 d^2 + \sigma(\epsilon - c_2^2\sigma^2d)} \;\;\text{and}\;\; \trace(\mathcal{E}^2) \leq \frac{\epsilon - c_2^2 \sigma^2d}{\sigma(1 - 2 c_1 \sigma d)}
\end{split}
\end{equation}

(2) The difference between newton step of two subset is bounded. ($\lambda_{\max}, \lambda_{\min}$) are largest and smallest eigenvalue in $H_{S, w^*}$
\begin{equation*}
    \begin{split}
        \|H^{-1}_{S',\Bar{w}}\nabla l_{S'}(\Bar{w}) - H^{-1}_{S,\Bar{w}} \nabla l_S(\Bar{w})\| &\leq \frac{1}{\lambda_{\min}}\frac{1}{2}(c_1\sigma^2d + \sqrt{c_1^2\sigma^4d^2 + 4\epsilon})+ c \frac{\lambda_{\max}^\frac{1}{2}}{\lambda_{\min}^2} (c_1 \sigma d +\\ & \frac{1}{\sigma} \sqrt{c_1^2 \sigma^4 d^2 + \sigma(\epsilon - c_2^2\sigma^2d)}) + \frac{\lambda_{\max}^{\frac{1}{2}}}{\lambda_{\min}^2} \frac{1}{2}(c_1\sigma^2d + \sqrt{c_1^2\sigma^4d^2 + 4\epsilon})(c_1 \sigma d \\ &+ \frac{1}{\sigma} \sqrt{c_1^2 \sigma^4 d^2 + \sigma(\epsilon - c_2^2\sigma^2d)}) + \mathcal{O}(||\mathcal{E}||^2)\\
    \end{split}
\end{equation*}

\end{theorem}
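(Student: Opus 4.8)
The plan is to work from the $(\sigma,\epsilon,\bar w)$-stability condition $E_{\delta}\|\nabla l_{S'}(\bar w+\delta)-\nabla l(\bar w+\delta)\|^2\le\epsilon$ and extract successively higher-order information from it by Taylor-expanding the gradient difference $g(w):=\nabla l_{S'}(w)-\nabla l_S(w)$ around $\bar w$. Writing $g(\bar w+\delta)=g(\bar w)+\mathcal{E}\,\delta+O(\|\delta\|^2)$, taking the squared norm and then the expectation over $\delta\sim N(0,\sigma I)$ kills the cross term (since $E[\delta]=0$) and gives roughly
\begin{equation*}
E_{\delta}\|g(\bar w+\delta)\|^2 \;\approx\; \|g(\bar w)\|^2 \;+\; E_{\delta}\|\mathcal{E}\delta\|^2 \;+\;(\text{higher-order}) \;=\; \|g(\bar w)\|^2 + \sigma\,\trace(\mathcal{E}^2) + \cdots \;\le\;\epsilon .
\end{equation*}
The higher-order remainder terms carry the $c_1\sigma d$ and $c_2^2\sigma^2 d$ constants in the statement; I would bound the third-derivative/curvature contributions crudely by their worst-case size in terms of $\sigma$ and $d$ and absorb them into those constants. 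Part~(1), the gradient bound, then follows by dropping the nonnegative $\sigma\trace(\mathcal{E}^2)$ term and solving the resulting quadratic inequality $\|g(\bar w)\|^2 + c_1\sigma^2 d\,\|g(\bar w)\| - \epsilon\le 0$ for $\|g(\bar w)\|$ via the quadratic formula, which is exactly the stated $\tfrac12(c_1\sigma^2 d+\sqrt{c_1^2\sigma^4d^2+4\epsilon})$.

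For the Hessian bounds I would instead drop the $\|g(\bar w)\|^2$ term (or move the already-obtained bound on it to the right side as $c_2^2\sigma^2 d$) and isolate $\sigma\trace(\mathcal{E}^2)\le \epsilon - c_2^2\sigma^2 d$ up to the higher-order corrections; the factor $(1-2c_1\sigma d)$ in the denominator comes from carrying one such correction term of the form $2c_1\sigma d\,\trace(\mathcal{E}^2)$ to the left-hand side and dividing through. The operator-norm bound $\|\mathcal{E}\|$ follows either from $\|\mathcal{E}\|^2\le\trace(\mathcal{E}^2)$ combined with the trace bound, or by a parallel quadratic-inequality argument applied to the top eigenvalue of $\mathcal{E}$ (choosing $\delta$ aligned with the top eigenvector in spirit, then averaging), yielding the stated $c_1\sigma d+\tfrac1\sigma\sqrt{c_1^2\sigma^4d^2+\sigma(\epsilon-c_2^2\sigma^2d)}$ form.

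For part~(2), the Newton-step difference, I would write $H_{S',\bar w}^{-1}\nabla l_{S'}-H_{S,\bar w}^{-1}\nabla l_S$ and add and subtract the cross term $H_{S,\bar w}^{-1}\nabla l_{S'}$, so that the difference splits into $H_{S,\bar w}^{-1}(\nabla l_{S'}-\nabla l_S)$ plus $(H_{S',\bar w}^{-1}-H_{S,\bar w}^{-1})\nabla l_{S'}$. The first piece is bounded by $\lambda_{\min}^{-1}\|g(\bar w)\|$ using part~(1). For the second piece I would use the resolvent identity $H_{S'}^{-1}-H_S^{-1}=-H_S^{-1}\mathcal{E}H_{S'}^{-1}=-H_S^{-1}\mathcal{E}H_S^{-1}+O(\|\mathcal{E}\|^2)$, bound $\|H_S^{-1}\|\le\lambda_{\min}^{-1}$, and bound $\|\nabla l_{S'}\|\le\|\nabla l_S\|+\|g(\bar w)\|$ where $\|\nabla l_S\|\lesssim\lambda_{\max}^{1/2}$ near the optimum (this is where the $\lambda_{\max}^{1/2}$ factors enter), then plug in the part~(1) gradient bound and the part~(1) Hessian bound for $\|\mathcal{E}\|$; collecting terms gives the three-term expression plus the explicit $O(\|\mathcal{E}\|^2)$ remainder. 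The main obstacle I anticipate is controlling the higher-order Taylor remainders rigorously — making precise exactly which smoothness/boundedness assumption on third derivatives justifies absorbing them into $c_1,c_2$, and verifying the sign conventions so that $(1-2c_1\sigma d)$ stays positive in the regime of interest; everything after that is bookkeeping with the quadratic formula and the resolvent identity.
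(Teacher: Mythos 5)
Your proposal follows essentially the same route as the paper's proof: Taylor-expand the gradient difference $g(w)=\nabla l_{S'}(w)-\nabla l_S(w)$ inside the stability expectation so that the zeroth-order term yields the gradient bound via a quadratic inequality, the linear term yields $\sigma\,\trace(\mathcal{E}^2)$ (and the top-eigenvalue quadratic for $\|\mathcal{E}\|$), the Taylor remainders---controlled in the paper by Lipschitz and lower-bound assumptions on the Hessian difference $\nabla f$---supply the $c_1$ and $c_2$ constants exactly as you anticipate, and the Newton step is handled by the same first-order inverse-perturbation expansion with a triangle inequality. The only slip is the sign in your quadratic: the surviving cross term between $g(\bar w)$ and the remainder gives $\|g(\bar w)\|^2 - c_1\sigma^2 d\,\|g(\bar w)\|\le\epsilon$, whose positive root is the stated bound, a bookkeeping point you already flagged.
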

\begin{proof}

Let $f(\Bar{w}) = \nabla l_{S'}(\Bar{w}) - \nabla l_S(\Bar{w})$ and $\nabla f(\Bar{w}) = \nabla^2 l_{S'}(\Bar{w}) - \nabla^2 l_S(\Bar{w})$ (difference in terms of Hessian)

\textbf{Assumption 1} Suppose we have the $\nabla f(\Bar{w})$ being $2c_1$-Lipschitz Hessian i.e.,
\begin{equation}
\begin{split}
    ||\nabla f(w) - \nabla f(\Bar{w})|| \leq 2c_1\| w - \Bar{w}\|
\end{split}
\end{equation}

\textbf{Assumption 2} Suppose we have the $\nabla f(\Bar{w})$ being bounded below and above
\begin{equation}
\begin{split}
   2c_2 I \preceq \nabla f(w), \quad \forall w.
\end{split}
\end{equation}

\textbf{Assumption 3} (Symmetric and non-singular) $\nabla f(w)$ is symmetric and non-singular.

With assumption 1 and assumption 2, we can bound the $f(\Bar{w})$ by the following through \cite{hefferon2017linear}:
\begin{equation}
\begin{split}
    c_2 \|w-\Bar{w}\| \leq \|f(w) - f(\Bar{w}) - \nabla f(\Bar{w})(w-\Bar{w})\| \leq c_1\|w-\Bar{w}\| \;\;\; \forall w, \Bar{w}
\end{split}
\end{equation}

The assumption is aiming to capture the degree of change of function using polynomial terms. For more discuss about the assumption of the theory, please refer to appendix \ref{assmption discussion}

we start with the following
\begin{equation}
\begin{split}
    \epsilon &\geq \int f(w)^2 p(w)dw \\
             &= \int (f(\Bar{w}) + f(w) - f(\Bar{w}))^2 p(w)dw\\
             &= f(\Bar{w})^2 + 2 f(\Bar{w}) \int (f(w) - f(\Bar{w}))p(w)dw + \int (f(w) - f(\Bar{w}))^2 p(w) dw \\
             &\geq |f(\Bar{w})|^2 - c_1 \sigma^2d |f(\Bar{w})| + \int (f(w) - f(\Bar{w}))^2 p(w) dw 
\end{split}
\end{equation}

For last inequality, we use the following:

\begin{equation}
\begin{split}
    f(\bar{w}) \int (f(w) - f(\Bar{w}))p(w)dw &= f(\bar{w}) \int (f(w) - f(\Bar{w}) -\nabla f(\bar{w})(w-\bar{w}) + \nabla f(\bar{w})(w-\bar{w}))p(w)dw \\
    &= f(\bar{w}) \int ((f(w) - f(\Bar{w}) -\nabla f(\bar{w})(w-\bar{w}))p(w)dw + f(\bar{w})\int \nabla f(\bar{w})(w-\bar{w}))p(w)dw \\
    &= f(\bar{w}) \int ((f(w) - f(\Bar{w}) -\nabla f(\bar{w})(w-\bar{w}))p(w)dw \\
    &\leq ||f(\bar{w})|| \int ||(f(w) - f(\Bar{w}) -\nabla f(\bar{w})(w-\bar{w})||p(w)dw \\
    &\leq ||f(\bar{w})|| \int c_1||w-\bar{w}||p(w)dw \\
    & = c_1 ||f(\bar{w})|| E_p[||w-\bar{w}||] \\
    &\leq c_1 ||f(\bar{w})|| \sqrt{E_p||w-\bar{w}||^2} \\
    &= c_1 ||f(\bar{w})|| \sigma \sqrt{d}
\end{split}
\end{equation}

We first focus on the first two terms
\begin{equation}
\begin{split}
    \epsilon &\geq \|f(\Bar{w})\|^2 - c_1 \sigma^2d \|f(\Bar{w})\|
\end{split}
\end{equation}

By solving the equation, we can obtain upper bound on the $|f(\Bar{w})|$ as following:
\begin{equation}
\begin{split}
    \|f(\Bar{w})\| \leq \frac{1}{2}(c_1\sigma^2d + \sqrt{c_1^2\sigma^4d^2 + 4\epsilon}) = \mathcal{O}(\epsilon^\frac{1}{2})
\end{split}
\end{equation}

as a check, if we use linear approximation (i.e., $c_1 = 0$)
we will have 
\begin{equation}
\begin{split}
    \|f(\Bar{w})\| \leq \sqrt{\epsilon}
\end{split}
\end{equation}

Now, obtain upper bound on the graident differece at $\Bar{w}$, we continue on the cross terms

\begin{equation}
\begin{split}
    \epsilon &\geq \int (f(w) - f(\Bar{w}))^2 p(w) dw \\
             & = \int (f(w) - f(\Bar{w}) - \nabla f(\Bar{w})(w-\Bar{w}) + \nabla f(\Bar{w})(w-\Bar{w}))^2p(w)dw \\
             &=\int (f(w)-f(\Bar{w})-\nabla f(\Bar{w})(w-\Bar{w}))^2p(w)dw\\ +& 2 \int (f(w) -f(\Bar{w}) -\nabla f(\Bar{w})(w-\Bar{w}))\nabla f(\Bar{w})(w-\Bar{w})p(w)dw + \int (w-\Bar{w})\nabla f(\Bar{w})^2 (w-\Bar{w})p(w)dw
\end{split}
\end{equation}

we continue by noticing that first term is lower bounded in our assumption.

\begin{equation}
\begin{split}
    \epsilon  &\geq c_2^2 \sigma^2d \\ +& 2 \int (f(w) -f(\Bar{w}) -\nabla f(\Bar{w})(w-\Bar{w}))\nabla f(\Bar{w})(w-\Bar{w})p(w)dw + \int (w-\Bar{w})\nabla f(\Bar{w})^2 (w-\Bar{w})p(w)dw
\end{split}
\end{equation}

The first term is obtained through following:

\begin{equation}
\begin{split}
    \int ||f(w) - f(\bar{w}) - \nabla f(\bar{w})(w - \bar{w})||^2 p(w)dw 
    &\geq \int c_2 ||w-\bar{w}||^2p(w)dw \\
    &=c_2 E_p[||w-\bar{w}||^2] \\
    &=c_2 \sigma^2d
\end{split}
\end{equation}

Now address the cross term, 

\begin{equation}
\begin{split}
    2 \int (f(w) -f(\Bar{w}) -\nabla f(\Bar{w})(w-\Bar{w}))\nabla f(\Bar{w})(w-\Bar{w})p(w)dw &\geq -2 c_1 \int \|w-\Bar{w}\|^2 \mathcal{E}(w-\Bar{w})|p(w)dw\\
    &\geq -2c_1 \max_{i} \lambda_{\mathcal{E},i}\int (w-\Bar{w})^2p(w)dw\\
    &\geq-2c_1  \sigma^2 d \max_{i} \lambda_{\mathcal{E},i}
\end{split}
\end{equation}

Now, we address the last term
\begin{equation}
\begin{split}
    \int (w - \Bar{w}) \mathcal{E}^2 (w - \Bar{w}) 2\pi^{-\frac{d}{2}}\det(\sigma I)^{-\frac{1}{2}}\exp({-\frac{1}{2} (w - \Bar{w}) (\sigma I)^{-1} (w-\Bar{w})})dw \\
\end{split}
\end{equation}

By change of variable $u = \nabla f(\Bar{w}) (w - \Bar{w})$, we can obtain the following by assuming that $\mathcal{E}$ is not degenerate:

\begin{equation}
\begin{split}
    \int &|\det(\mathcal{E}^{-1})| u^Tu 2\pi^{-\frac{d}{2}}\det(\sigma I)^{-\frac{1}{2}}\exp({-\frac{1}{2} u^T \mathcal{E}_1^{-1} (\sigma I)^{-1} \mathcal{E}^{-1} u}) du \\
            &= |\det(\mathcal{E}^{-1})|\frac{\det(\mathcal{E} (\sigma I) \mathcal{E})^{\frac{1}{2}}}{\det((\sigma I))^{\frac{1}{2}}} \trace(\mathcal{E} (\sigma I)\mathcal{E})\\
            &= \trace(\sigma I \mathcal{E}^2)\\
            &\geq \sigma \trace(\mathcal{E}^2)\\
            &\geq \sigma \max_{i} \lambda_{\mathcal{E},i}^2
\end{split}
\end{equation}

We assume the $H^{-1}$ and $\nabla f(\Bar{w})$ are symmetry matrix. Here, we utilize two identities for the first equility and second inequility:
\begin{equation}
\begin{split}
    \trace (ABC) = \trace(BCA)
\end{split}
\end{equation}

The second identity follows the following derivation by first noting that symmetry matrix can be decompose into the spectral form $A = \sum_i \lambda_{A, i} v_iv_i^T$

\begin{equation}
\begin{split}
    \trace (AB) &= \sum_i \lambda_{B,i} (v_i^T A v_i)\\
                &\geq \min_i \lambda_{B,i} \sum_i (v_i^T A v_i)\\
                &= \min_i \lambda_{B,i} \trace (A) \\
\end{split}
\end{equation}

Now, we put everything together,
\begin{equation}
\begin{split}
    \epsilon \geq c_2^2 \sigma^2d  -2c_1  \sigma^2 d \max_{i} \lambda_{\mathcal{E},i} + \sigma \max_{i} \lambda_{\mathcal{E},i}^2
\end{split}
\end{equation}

we can solve for the largest difference in eigenvalue
\begin{equation}
\begin{split}
    c_1 \sigma d + \frac{1}{\sigma} \sqrt{c_1^2 \sigma^4 d^2 + \sigma(\epsilon - c_2^2\sigma^2d)} \geq \max_{i} \lambda_{\mathcal{E},i}
\end{split}
\end{equation}

Therefore, we can have that 
\begin{equation}
\begin{split}
    \|H_{S',\Bar{w}} - H_{S,\Bar{w}}\| \leq \mathcal{O}(\epsilon^\frac{1}{2})
\end{split}
\end{equation}

We can also obtain upper bound on the difference in Hessian in terms of trace
\begin{equation}
\begin{split}
    \epsilon - c_2^2 \sigma^2d &\geq   -2c_1  \sigma^2 d \max_{i} \lambda_{\mathcal{E}_1,i} + \sigma \max_{i} \lambda_{\mathcal{E}_1,i}^2 \\
    &\geq -2c_1  \sigma^2 d \trace(\mathcal{E}^2) + \sigma \trace(\mathcal{E}^2)\\
    &\geq \sigma(1 - 2 c_1 \sigma d)\trace(\mathcal{E}^2)
\end{split}
\end{equation}

and therefore
\begin{equation}
\begin{split}
    \frac{\epsilon - c_2^2 \sigma^2d}{\sigma(1 - 2 c_1 \sigma d)} 
    &\geq \trace(\mathcal{E}^2)
\end{split}
\end{equation}

Now, to prove the newton step is also similar, we write the overall into the following equation
\begin{equation}
    \begin{split}
    \|H^{-1}_{S',\Bar{w}}\nabla_s' l(\Bar{w}) - H^{-1}_{S,\Bar{w}} \nabla l_S(\Bar{w})\| &= \|(H_{S, \Bar{w}} + \mathcal{E})^{-1} (\nabla l_{S}(\Bar{w}) + \mathcal{E}_2) - H^{-1}_{S,\Bar{w}} \nabla l_S(\Bar{w})\|
    \end{split}
\end{equation}

To obtain the upper bound of the Hessian inverse, we use inverse perturbation theory
\begin{equation}
    \begin{split}
        (H_{S, \Bar{w}} + \mathcal{E})^{-1} = H_{S, \Bar{w}}^{-1} - H_{S, \Bar{w}}^{-1}\mathcal{E}H_{S, \Bar{w}}^{-1} + \mathcal{O}(||\mathcal{E}||^2)
    \end{split}
\end{equation}

Now, we obtain upper bound on the magnitude of eigenvalue of the difference on the Hessian for the subset.(i.e., $\max_i |\lambda_{\mathcal{E},i}| \leq c_1 \sigma d + \frac{1}{\sigma} \sqrt{c_1^2 \sigma^4 d^2 + \sigma(\epsilon - c_2^2\sigma^2d)}$ and the gradient difference $\|\nabla l_{s'}(w) - \nabla l_s(w)\| \leq \frac{1}{2}(c_1\sigma^2d + \sqrt{c_1^2\sigma^4d^2 + 4\epsilon})$. We can follow to upper bound the desired quantity $\|H^{-1}_{S', \Bar{w}}\nabla l_{S'}(w) - H^{-1}_{S,\Bar{w}} \nabla l_s(w)\|^2_2$. At this point, we assume that the gradient has bounded magnitude $\|\nabla l_S(w)\|\leq c, \; \forall S, w$

Put in the above, and assume that the gradient has bounded magnitude $\|\nabla l_S(w)\|\leq c \;\;\forall S, w$
\begin{equation}
\begin{split}
&\|H^{-1}_{S',\Bar{w}}\nabla_s' l(\Bar{w}) - H^{-1}_{S,\Bar{w}} \nabla l_s(\Bar{w})\|\\ &= \|(H_{S, \Bar{w}} + \mathcal{E})^{-1} (\nabla l_{S}(\Bar{w}) + \mathcal{E}_2) - H^{-1}_{S,\Bar{w}} \nabla l_s(\Bar{w})\|\\
 &\leq \|H^{-1}_{S,\Bar{w}}  \mathcal{E}_2 - H_{S, \Bar{w}}^{-1}\mathcal{E}H_{S, \Bar{w}}^{-1}\nabla l_{S}(\Bar{w}) + H_{S, \Bar{w}}^{-1}\mathcal{E}H_{S, \Bar{w}}^{-1}\mathcal{E}_2) + \mathcal{O}(||\mathcal{E}||^2)\| \\
 &\leq \|H^{-1}_{S,\Bar{w}}\mathcal{E}_2\| + \|H_{S, \Bar{w}}^{-1}\mathcal{E}H_{S, \Bar{w}}^{-1}\nabla l_{S}(\Bar{w})\| + \|H_{S, \Bar{w}}^{-1}\mathcal{E}H_{S, \Bar{w}}^{-1}\mathcal{E}_2)\| + \mathcal{O}(||\mathcal{E}||^2)\\
 &\leq \frac{1}{\lambda_{\min}}\frac{1}{2}(c_1\sigma^2d + \sqrt{c_1^2\sigma^4d^2 + 4\epsilon})+ c \frac{\lambda_{\max}^\frac{1}{2}}{\lambda_{\min}^2} (c_1 \sigma d + \frac{1}{\sigma} \sqrt{c_1^2 \sigma^4 d^2 + \sigma(\epsilon - c_2^2\sigma^2d)}) + \frac{\lambda_{\max}^{\frac{1}{2}}}{\lambda_{\min}^2} \frac{1}{2}(c_1\sigma^2d + \sqrt{c_1^2\sigma^4d^2 + 4\epsilon})(c_1 \sigma d \\ &+ \frac{1}{\sigma} \sqrt{c_1^2 \sigma^4 d^2 + \sigma(\epsilon - c_2^2\sigma^2d)}) + \mathcal{O}(||\mathcal{E}||^2)\\
 &\leq \mathcal{O}(\epsilon^\frac{1}{2})
\end{split}
\end{equation}
At this point, we can conclude that

\begin{equation} \label{eq14}
\begin{split}
\|H^{-1}_{S',\Bar{w}}\nabla_s' l(\Bar{w}) - H^{-1}_{S,\Bar{w}} \nabla l_s(\Bar{w})\|^2_2 &\leq \mathcal{O}(\epsilon)
\end{split}
\end{equation}
\end{proof}

\subsection{Proof for Theorem \ref{thm:convergence}}
\label{proof:convergence}
Next, the assumptions required for the proof are listed below:\\
\textbf{Assumption 1}. Bounded variance and unbiased estimator of the random sampling and coreset selection subrutine:
\begin{equation}
    \begin{split}
        E[||\nabla l_S(w) - \nabla l(w)||^2] \leq \sigma_1^2
    \end{split}
\end{equation}
\begin{equation} \label{eq23}
    \begin{split}
        E[\nabla l_S(w)] = \nabla l(w)
    \end{split}
\end{equation}
\textbf{Assumption 2}. $\alpha$-lipschitz continuity:
\begin{equation} \label{eq24}
    \begin{split}
        l(w) - l(v) \leq \alpha ||w - v||_2
    \end{split}
\end{equation}
\textbf{Assumption 3}. $\beta$-smoothness:
\begin{equation} \label{eq25}
    \begin{split}
        ||\nabla l(w) - \nabla l(v)||  \leq \beta ||w - v||_2
    \end{split}
\end{equation}
\textbf{Assumption 4}. $\epsilon_i \in \Re^d$ sampled i.i.d from diagonal gaussian:
\begin{equation} \label{eq26}
    \begin{split}
        \epsilon_i \sim N(0, \sigma_2 I), \;\; \forall i = 1...M
    \end{split}
\end{equation}

There exist two randomnesses in our formulation. One is the subset obtained through random sampling from the whole dataset. The other randomness results from the noise inject to model weight.

\begin{equation} \label{eq27}
    \begin{split}
        l(w_{t+1}) &\leq l(w_t) + \nabla l(w_t)^T (w_{t+1} - w_t) + \frac{\beta}{2} ||w_{t+1} - w_t||^2 \\
                   &\leq l(w_t) - \eta_t \nabla l(w_t)^T \frac{1}{M} \sum_{i=1}^M \nabla_s l(w_t + \epsilon_i) + \frac{\beta \eta_t ^2}{2}|| \frac{1}{M} \sum_{i=1}^M \nabla_S l(w_t + \epsilon_i)||^2 \\
    \end{split}
\end{equation}

We rewrite the last term as follows:

\begin{equation} \label{eq28}
    \begin{split}
        \frac{\beta \eta_t ^2}{2}|| \frac{1}{M} \sum_{i=1}^M \nabla_S l(w_t + \epsilon_i)||^2 &= \frac{\beta \eta_t ^2}{2} (||\frac{1}{M} \sum_{i=1}^M \nabla_S l(w_t + \epsilon_i) - \nabla l(w_t)||^2 + \frac{2}{M} \sum_{i=1}^M \nabla_S l(w_t + \epsilon_i) \nabla l(w_t) - ||\nabla l(w_t)||^2)
    \end{split}
\end{equation}

Input the above into the original formulation:

\begin{equation} \label{eq29}
    \begin{split}
        l(w_{t+1}) &\leq l(w_t) - \eta_t \nabla l(w_t)^T \sum_{i=1}^M \nabla_S l(w_t + \epsilon_i) + \frac{\beta \eta_t ^2}{2}|| \frac{1}{M} \sum_{i=1}^M \nabla_S l(w_t + \epsilon_i)||^2 \\
                   &\leq l(w_t) - \eta_t(1 - \eta_t \beta) \nabla l(w_t)^T \frac{1}{M} \sum_{i=1}^M \nabla_S l(w_t + \epsilon_i) + \frac{\beta \eta_t ^2}{2}(||\frac{1}{M} \sum_{i=1}^M \nabla_S l(w_t + \epsilon_i) - \nabla l(w_t)||^2 - ||\nabla l(w_t) ||^2) \\
                   &\leq l(w_t) - \eta_t(1 - \eta_t \beta) \nabla l(w_t)^T \frac{1}{M} \sum_{i=1}^M \nabla_S l(w_t + \epsilon_i) - \frac{\beta \eta_t ^2}{2} ||\nabla l(w_t) ||^2  + \\ &\beta \eta_t ^2 ||\frac{1}{M} \sum_{i=1}^M \nabla_S l(w_t + \epsilon_i) - \nabla l(w_t + \epsilon_i)||^2 + \beta \eta_t ^2 ||\frac{1}{M} \sum_{i=1}^M \nabla l(w_t + \epsilon_i) - \nabla l(w_t)||^2 \\
    \end{split}
\end{equation}

The last two terms are achieved through the identity $||a - b|| \leq 2 (||a-c|| + ||b-c||)$. We now want to resolve the second term in the formulation.
\begin{equation} \label{eq30}
    \begin{split}
        \nabla l(w_t)^T \frac{1}{M} \sum_{i=1}^M \nabla_S l(w_t + \epsilon_i) &= \nabla l(w_t)^T (\frac{1}{M} \sum_{i=1}^M \nabla_S l(w_t + \epsilon_i) - \nabla l(w_t) + \nabla l(w_t))\\
        &= ||\nabla l(w_t)||^2 + \nabla l(w_t)^T(\frac{1}{M} \sum_{i=1}^M \nabla_S l(w_t + \epsilon_i) - \nabla l(w_t))\\
        &= ||\nabla l(w_t)||^2 +\\ &\nabla l(w_t)^T(\frac{1}{M} \sum_{i=1}^M (\nabla_S l(w_t + \epsilon_i) - \nabla l(w_t + \epsilon_i)) +
        \frac{1}{M} \sum_{i=1}^M \nabla l(w_t + \epsilon_i) - \nabla l(w_t))
    \end{split}
\end{equation}

The term can be simplified by taking the expectation over the randomness in a stochastic subset.
\begin{equation} \label{eq31}
    \begin{split}
        E_S [ \nabla l(w_t)^T \frac{1}{M} \sum_{i=1}^M \nabla_S l(w_t + \epsilon_i) ] 
        &= ||\nabla l(w_t)||^2 + \nabla l(w_t)^T(\frac{1}{M} \sum_{i=1}^M \nabla l(w_t + \epsilon_i) - \nabla l(w_t))\\
        &\leq ||\nabla l(w_t)||^2 + \frac{1}{2}||\nabla l(w_t)||^2 + \frac{1}{2}(||\frac{1}{M} \sum_{i=1}^M \nabla l(w_t + \epsilon_i) - \nabla l(w_t)||^2) \\
    \end{split}
\end{equation}

The last term is achieved through the Cauchy-Schwarz inequality. 
Therefore, 
\begin{equation} \label{eq32}
    \begin{split}
        -E_s [ \nabla l(w_t)^T \frac{1}{M} \sum_{i=1}^M \nabla_S l(w_t + \epsilon_i) ]
        &\leq -\frac{1}{2}||\nabla l(w_t)||^2 + \frac{1}{2}(||\frac{1}{M} \sum_{i=1}^M \nabla l(w_t + \epsilon_i) - \nabla l(w_t)||^2)
    \end{split}
\end{equation}

We further simplified the formulation by taking the expectation over the randomness in noise injected to the model weight
\begin{equation} \label{eq33}
    \begin{split}
        -E_{s, \epsilon_i, i = 1...d} [ \nabla l(w_t)^T \frac{1}{M} \sum_{i=1}^M \nabla_S l(w_t + \epsilon_i) ] 
        &\leq -\frac{1}{2}||\nabla l(w_t)||^2 + \frac{1}{2}\beta^2 \sigma_2^2 d
    \end{split}
\end{equation}

Next, we first solve the fifth term in the original formulation taking the expectation with respect to $\epsilon_i$ for each $i$.

\begin{equation} \label{eq34}
    \begin{split}
        \beta \eta_t^2 E[||\frac{1}{M} \sum_{i=1}^M \nabla l(w_t + \epsilon_i) - \nabla l(w_t)||^2] &\leq \beta \eta_t^2 E[||\frac{1}{M}\sum_{i=1}^M \beta \epsilon_i||^2]\\
        &\leq \beta^3 \eta_t^2 \frac{1}{M} \sum_{i=1}^M E[||\epsilon_i||^2]\\
        &\leq \beta^3 \eta_t^2 \sigma_2^2d
    \end{split}
\end{equation}

Finally, we deal with the term.
\begin{equation} \label{eq35}
    \begin{split}
        \beta \eta_t ^2 ||\frac{1}{M} \sum_{i=1}^M \nabla_S l(w_t + \epsilon_i) - \nabla l(w_t + \epsilon_i)||^2
    \end{split}
\end{equation}

Here, we assume two different errors that can arise in practice. The first is the random sampling batch created by sampling from the entire dataset. The second error originates from the coreset approximation. We formulate as follows:

\begin{equation} \label{eq36}
    \begin{split}
        \nabla l(w) = \nabla_S l(w) + \xi_1 + \xi_2
    \end{split}
\end{equation}

The $\xi_1$ is the result of the stochasticity of the random batch generation. The $\xi_2$ is the error that originates from the selection of the coreset. We consider two different forms of error in $\xi_2$. One is the absolute error and the other is that the error is propotional to the gradient. We formulate as follows:
\begin{equation} \label{eq37}
    \begin{split}
        E[||\xi_2||] &\leq \epsilon \;\;\; \text{or} \;\;\; E[||\xi_2||] \leq \epsilon ||\nabla l(w)||, \;\;\;\epsilon>0
    \end{split}
\end{equation}
Note: we assume here that these two errors $\xi_1, \xi_2$ are independent to each other.

\textbf{Situation 1} We first consider the case where the error is absolute.
\begin{equation} \label{eq38}
    \begin{split}
        \beta \eta_t ^2 E_{S, \epsilon_i, i=1...M}E||\frac{1}{M} \sum_{i=1}^M \nabla_S l(w_t + \epsilon_i) - \nabla l(w_t + \epsilon_i)||^2 &\leq \beta \eta_t ^2 \frac{1}{M^2} \sum_{i=1}^M E||\nabla_S l(w_t + \epsilon_i) - \nabla l(w_t + \epsilon_i)||^2\\
        &= \beta \eta_t ^2 \frac{1}{M^2} \sum_{i=1}^M E[||\xi_{1,i}||^2 + 2 \xi_{1,i} \xi_{2,i} + ||\xi_{2,i}||^2] \\
        &= \beta \eta_t ^2 \frac{1}{M} (||\xi_1||^2 + 2 \xi_1 \xi_2 + ||\xi_2||^2) \\
        &\leq \beta \eta_t ^2 \frac{1}{M} (\frac{\sigma^2_1}{R} + \epsilon^2)
    \end{split}
\end{equation}

The R is the batch size for each batch of random sampling. The cross terms are eliminated due to the assumption 5.

Integrate those terms into the original formulation.
\begin{equation} \label{eq39}
    \begin{split}
        l(w_{t+1}) &\leq l(w_t) - \eta_t ||\nabla l(w_t)||^2 + \frac{\eta_t(1-\eta_t\beta)}{2} \beta^2\sigma^2_2d + \frac{\beta\eta_t^2\sigma_1^2}{MR} + \frac{\beta\eta_t^2\epsilon^2}{M} + \beta^3\eta_t^2\sigma^2_2d
    \end{split}
\end{equation}

Here, we pick $\eta_t = \eta$ (fixed step size) and $\eta \leq \frac{1}{\beta}$. Rearrange and sum over time step, and we will have as follows:

\begin{equation} \label{eq40}
    \begin{split}
       \eta \sum_{t=0}^{T-1} ||\nabla l(w_t)||^2 \leq l(w_0) - l(w^*) + \frac{\beta^2\sigma^2_2d}{2} \sum_{t=0}^{T-1}\eta(1 + \eta\beta) + \frac{\beta\epsilon^2}{M} \sum_{t=0}^{T-1} \eta^2 + \frac{\beta\sigma_1^2}{MR} \sum_{t=0}^{T-1} \eta^2 
    \end{split}
\end{equation}

Here, we divide on both sides by $T \eta$

\begin{equation} \label{eq41}
    \begin{split}
       \frac{1}{T} \sum_{t=0}^{T-1} ||\nabla l(w_t)||^2 &\leq \frac{1}{T\eta}(l(w_0) - l(w^*)) + \frac{\beta^2\sigma^2_2d}{2T} \sum_{t=0}^{T-1}(1 + \eta\beta) + \frac{\beta\epsilon^2}{M} \eta + \frac{\beta\sigma_1^2}{MR} \eta \\
       &= \frac{1}{T\eta}(l(w_0) - l(w^*)) + \frac{\beta^2\sigma^2_2d}{2} +  \frac{\beta^3\sigma^2_2d}{2}\eta + \frac{\beta\epsilon^2}{M} \eta + \frac{\beta\sigma_1^2}{MR} \eta 
    \end{split}
\end{equation}

As we have control for both $\eta$ and $\sigma^2_2d$, we pick $\sigma^2_2d = \frac{1}{M\sqrt{T}}$ and $\eta = \min \{\frac{1}{\sqrt{T}}, \frac{1}{\beta}\}$ and, therefore,

\begin{equation} \label{eq42}
    \begin{split}
       \frac{1}{T} \sum_{t=0}^{T-1} ||\nabla l(w_t)||^2
       &\leq \frac{1}{T}(l(w_0) - l(w^*))\max \{\sqrt{T}, \beta\} + 
       \frac{1}{\sqrt{T}}(\frac{\beta^2}{2M}+\frac{\beta\epsilon^2}{M}+\frac{\beta\sigma_1^2}{MR}) + \frac{1}{T} \frac{\beta^3}{2M}
    \end{split}
\end{equation}

If we stop at any specific time step with probability $\frac{1}{T}$, and we observe that the average gradient exist convergent rate $\frac{1}{\sqrt{T}}$ for $T$ large enough which is:

\begin{equation} \label{eq43}
    \begin{split}
      E_t ||\nabla l(w_t)||^2
        &\leq \frac{1}{\sqrt{T}}(l(w_0) - l(w^*)) + 
       \frac{1}{\sqrt{T}}(\frac{\beta^2}{2M}+\frac{\beta\epsilon^2}{M}+\frac{\beta\sigma_1^2}{MR}) + \frac{1}{T} \frac{\beta^3}{2M}\\
       &= \mathcal{O}(\frac{1}{\sqrt{T}}(l(w_0) - l(w^*) + \frac{\beta^2}{2M}+\frac{\beta\epsilon^2}{M}+\frac{\beta\sigma_1^2}{MR}))
    \end{split}
\end{equation}

\textbf{Situation 2} We now consider the case where the error is propotional to the magnitude of the gradient. i.e.,
\begin{equation} \label{eq44}
    \begin{split}
        E[||\xi_2||] \leq \epsilon ||\nabla l(w)||, \;\;\;\epsilon>0
    \end{split}
\end{equation}

We analyze the term as follows:

\begin{equation} \label{eq45}
    \begin{split}
        \beta \eta_t ^2 E[||\frac{1}{M} \sum_{i=1}^M \nabla_S l(w_t + \epsilon_i) - \nabla l(w_t + \epsilon_i)||^2]
        &\leq \beta \eta_t ^2 \frac{1}{M} E[||\xi_1||^2 + 2 \xi_1 \xi_2 + ||\xi_2||^2] \\
        &\leq \beta \eta_t ^2 \frac{1}{M} (\frac{\sigma^2_1}{R} + \epsilon^2 ||\nabla l(w_t)||^2)
    \end{split}
\end{equation}

Integrate the term to previous result.
\begin{equation} \label{eq46}
    \begin{split}
        l(w_{t+1}) &\leq l(w_t) - (\eta_t - \frac{\beta \eta_t^2\epsilon^2}{M}) ||\nabla l(w_t)||^2 + \frac{\eta_t(1-\eta_t\beta)}{2} \beta^2\sigma^2_2d+\frac{\beta\eta_t^2\sigma_1^2}{MR} + \beta^3\eta_t^2\sigma^2_2d
    \end{split}
\end{equation}

Set $\eta_t = \eta$. We rearrange and perform same operation and we get:

\begin{equation} \label{eq47}
    \begin{split}
        \sum_{t=0}^{T-1}(\eta - \frac{\beta \eta^2\epsilon^2}{M}) ||\nabla l(w_t)||^2 &\leq l(w_0) - l(w^*) + \frac{\beta^2\sigma^2_2d}{2} \sum_{t=0}^{T-1}\eta(1 + \eta\beta) + \frac{\beta\sigma_1^2}{MR} \sum_{t=0}^{T-1} \eta^2\\
        &= l(w_0) - l(w^*) + \frac{\beta^2\sigma^2_2d}{2} T\eta(1 + \eta\beta) + \frac{\beta\sigma_1^2}{MR} T \eta^2
    \end{split}
\end{equation}

Divide by $T(\eta - \frac{\beta \eta^2\epsilon^2}{M})$ on both sides and choose step size such that $(1 - \frac{\beta^2 \eta \epsilon^2}{M}) \geq \frac{1}{2} $

\begin{equation} \label{eq48}
    \begin{split}
        \frac{1}{T}\sum_{t=0}^{T-1}||\nabla l(w_t)||^2 &\leq \frac{2}{T}(l(w_0) - l(w^*)) + \beta^2\sigma^2_2d (1 + \eta\beta) + \frac{2\beta\sigma_1^2}{MR}\eta\\
    \end{split}
\end{equation}

We pick $\sigma_2^2d=\frac{1}{\sqrt{MRT}}$ and $\eta = \frac{\sqrt{MR}}{\sqrt{T}}$ and we will have
\begin{equation} \label{eq49}
    \begin{split}
        \frac{1}{T}\sum_{t=0}^{T-1}||\nabla l(w_t)||^2 
        &\leq \frac{2}{T}(l(w_0) - l(w^*)) + \frac{\beta^2}{\sqrt{MRT}} (1 + \beta\frac{\sqrt{MR}}{\sqrt{T}}) + \frac{2\beta\sigma_1^2}{\sqrt{MRT}}\\
        &= \mathcal{O} (\frac{1}{\sqrt{MRT}}(2(l(w_0) - l(w^*)) + \beta^2 + 2\beta\sigma_1^2))
    \end{split}
\end{equation}

Similar to the first situation, we will have convergence rate with $\frac{1}{\sqrt{MRT}}$ which is $\frac{1}{\sqrt{M}}$ faster than the naive SGD. \\\\\\\\
\clearpage
\subsection{Data summary}
\begin{table*}[ht]
\resizebox{\textwidth}{!}{%
\begin{tabular}{|c|c|c|c|c|c|c|}
\hline
\textbf{Dataset}          & \textbf{Corrupt Ratio} & \textbf{Random}         & \textbf{Crest}          & \textbf{Glister}        & \textbf{Craig}         & \textbf{Ours}           \\ \hline
\multirow{4}{*}{MNIST}    & 0                      & \textbf{0.9921±0.0008}  & 0.9892±0.0005           & 0.9997±0.0001           & (*)0.0972±0.0          & 0.9914±0.0003           \\ \cline{2-7} 
                          & 0.1                    & 0.9854±0.0009           & 0.8384±0.0812           & 0.7676±0.0271           & 0.0961±0.0009          & \textbf{0.9876±0.0009}  \\ \cline{2-7} 
                          & 0.3                    & 0.9772±0.0008           & 0.3374±0.3385           & 0.1815±0.0664           & 0.5196±0.2346          & \textbf{0.9809±0.0025}  \\ \cline{2-7} 
                          & 0.5                    & 0.962±0.0019            & 0.3277±0.2062           & 0.0725±0.0109           & 0.4264±0.0131          & \textbf{0.9715±0.0021}  \\ \hline
\multirow{4}{*}{EMNIST}   & 0                      & 0.8713±0.0026           & 0.7955±0.0196           & 0.8246±0.0049           & (*)0.0219±0.0          & \textbf{0.8715±0.0015}  \\ \cline{2-7} 
                          & 0.1                    & 0.8649±0.0016           & 0.7788±0.0132           & 0.6762±0.0094           & 0.0218±0.0007          & \textbf{0.8679±0.0012}  \\ \cline{2-7} 
                          & 0.3                    & \textbf{0.8557±0.0007}  & 0.7621±0.0044           & 0.4748±0.0236           & 0.301±0.2553           & 0.8396±0.0016           \\ \cline{2-7} 
                          & 0.5                    & \textbf{0.8409±0.0024}  & 0.7146±0.0051           & 0.2627±0.0172           & 0.321±0.0111           & 0.8100±0.0029           \\ \hline
\multirow{4}{*}{CIFAR-10} & 0                      & 0.8660±0.0015           & 0.8724±0.004            & 0.8207±0.0097           & 0.7618±0.008           & \textbf{0.8757±0.0029}  \\ \cline{2-7} 
                          & 0.1                    & 0.8481±0.001            & 0.8440±0.003            & 0.6350±0.0261           & 0.7490±0.0066          & \textbf{0.8544±0.0034}  \\ \cline{2-7} 
                          & 0.3                    & 0.8063±0.010            & 0.7843±0.004            & 0.3953±0.0548           & 0.7050±0.0112          & \textbf{0.8120±0.0058}  \\ \cline{2-7} 
                          & 0.5                    & 0.7257±0.015            & 0.6652±0.013            & 0.2175±0.0292           & 0.6320±0.0186          & \textbf{0.7318±0.0143}  \\ \hline
\multirow{4}{*}{CIFAR-100}& 0                      & 0.6659±0.0041           & 0.6728±0.003            & 0.6004±0.0052           & 0.5665±0.0053          & \textbf{0.6986±0.0025}  \\ \cline{2-7} 
                          & 0.1                    & 0.6268±0.005            & 0.6391±0.004            & 0.4538±0.0076           & 0.4629±0.0102          & \textbf{0.6644±0.0034}  \\ \cline{2-7} 
                          & 0.3                    & 0.5119±0.014            & 0.5712±0.006            & 0.2846±0.0074           & 0.2968±0.0162          & \textbf{0.6085±0.0044}  \\ \cline{2-7} 
                          & 0.5                    & 0.3293±0.014            & 0.4305±0.026            & 0.1578±0.0218           & 0.1603±0.0048          & \textbf{0.5014±0.0068}  \\ \hline
\end{tabular}%
}

\caption{Performance comparison of different methods across datasets and corruption ratios. Results are reported as mean ± standard deviation. Each experiments are average over 5 different random seed. The best performance for each setting is highlighted in bold. The * mark the situation in which has diverging behavior during the optimization. The situation usually occurs in Craig method for high coruption scenario. 
}
\label{tab:big table}
\end{table*}



\subsection{Discussion about different posterior}
\label{posterior discussion}
\textbf{Hessian inverse covariance} is the one containing exact information about loss landscape of the models at certain training points and it is also the posterior used in deriving the theory for sampling. However, calculation of Hessian can introduce large memory footprint as it will require us to keep track of the Hessian during training as listed in works \cite{yang2023sustainablelearningcoresetsdataefficient}. Not only it can cause large memory footprint, it is also hard to calculate and require steps of approximation. The selection strategy involving calculation of Hessian will inevitably be slowed down as it requires at least one forward, backward propogation and the intermediate calculation. Another issue about Hessian inverse is that despite it contains the curvetures information, it can easily fail to reflect on the true loss landscape from sampling viewpoint(see \ref{fig: failure mode}). For convex optimization view point, Hessian indeed captures the global information about the loss landscape, but for non-convex region, it can only express the local curvature information and sampling according to the local information can easily lead to sampling of high loss region. 

\textbf{Direct training with different random seeds}. The posterior consist of models trained with different random seeds is studied in \cite{fort2020deepensembleslosslandscape} and shown be simple and strong base line for posterior for uncertainty measurement. The prediction of models trained with different random seeds provides strong diversity compared to the sampling methods which explore only the local region. In practice, it is easy to implement and applied to various to different leaning scenarios. The shortcoming of the method is that it requires much larger memory than other method as it stores and trains multiple models at the same time. Additionally, it did not necessarily violate the above observation as the models involved in the posteriors are trained simultanously and have low loss guarantee.

\textbf{Diagonal Gaussian} Diagonal Gaussian posteriors is well studied and shown to offer generalization guarantee. It provides easy control for the region to explore. Despite the fact that it did not contain local curvetures information, it can still fit in the observation from our theory as long as we select proper range for the variance. Compared to the two previously mentioned method, it offers better speed and memory consumption as we only need to sample independent variables for the construction of the whole distribution. In addition to the advantages mentioned, there is subtle connection between this posterior and optimization and we will illustrate in the following.

\begin{table}[h]

\centering
\resizebox{0.6\textwidth}{!}{%
\begin{tabular}{|c|c|c|c|}
\hline
Speed up & CIFAR10    & CIFAR100    & TinyImagenet \\ \hline
Crest    & 0.46122396 & 1.220357534 & 1.328621908  \\ \hline
Our      & 1.71376899 & 1.227372798 & 1.448220165  \\ \hline
\end{tabular}%
}
\caption{Our method obtain better speed up compare to the benchmark method and the results also generalized to different datasets and architectures. The speed up is calculated as (Full training time / Method training time). The results are average over 5 different random seeds.}
\label{speed up}
\end{table}

\begin{figure*}[t]
    \centering
    \begin{minipage}{0.48\textwidth}
        \centering
        \includegraphics[width=\textwidth]{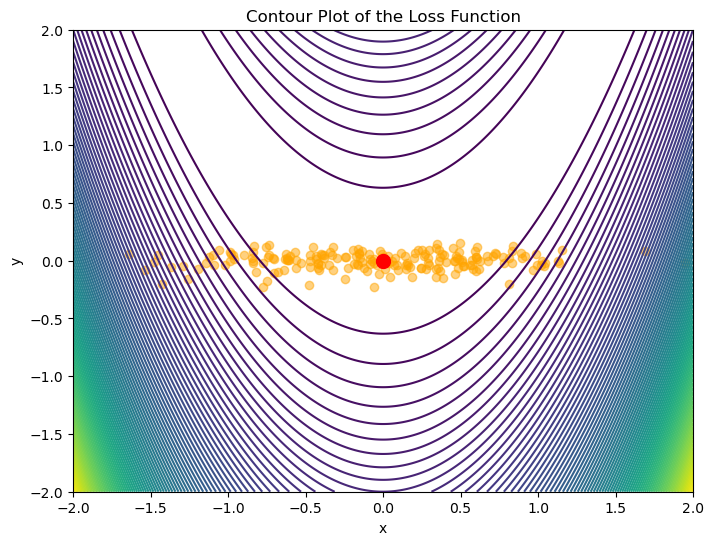}
    \end{minipage} \hfill
    \begin{minipage}{0.48\textwidth}
        \centering
        \includegraphics[width=\textwidth]{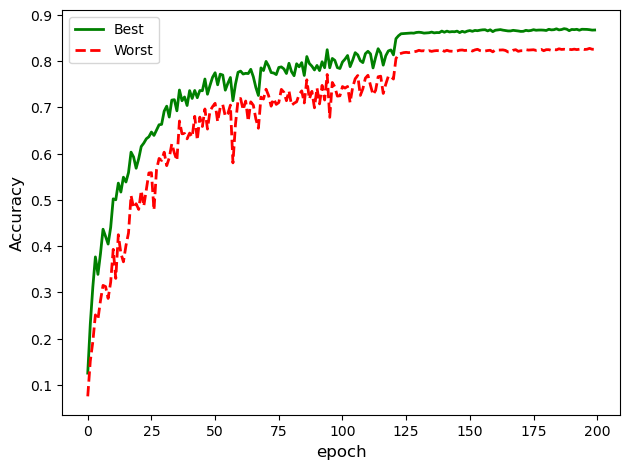}
    \end{minipage}
    \caption{(Left) We calculate the Hessian for sampling at red dot and sampling (yellow dot) using the Hessian inverse as a covariance matrix. The motivation for performing sampling of this kind is that one expect model to be sample lying on the low loss region as it put larger probability density in the small eigenvalue direction. However, when the Hessian loss its ability to represent the true loss curvature (non-convex setting), the Hessian posterior can sample models in regions of high loss, even when the Hessian is computed correctly. (Right) For ensemble method, we find its performance is competitive to Gaussian posterior. However, we find that there exist performance divergence in the different models trained with different random seed and the coreset selected show different performance gain for different models. The coreset selected through this posterior may not be able offer best performance gain in the ensemble.}
    \label{fig: failure mode}
\end{figure*}

\subsection{Details about Greedy selection}
\begin{figure*}[h]
    \centering
    \begin{minipage}{0.45\textwidth} 
        \centering
        \includegraphics[width=\textwidth]{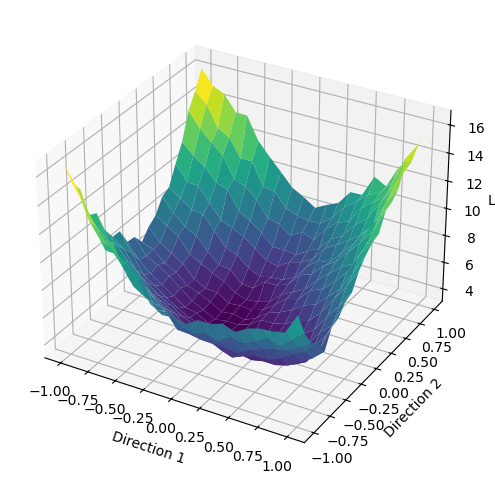}
        \label{fig:craig3d}
    \end{minipage} \hfill
    \begin{minipage}{0.45\textwidth}
        \centering
        \includegraphics[width=\textwidth]{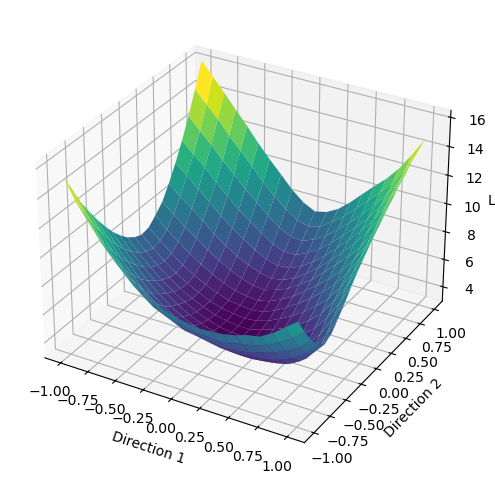}
        \label{fig:our3d}
    \end{minipage} \hfill
    \caption{The loss landscape generated by coreset of Craig method and the coreset of our method. The graph is generated with CIFAR10 data and ResNet20 using 1\% data budget.}
    \label{fig:3d loss landscape}
\end{figure*}

\label{greedy discussion}
Greedy selection method has been studied in many different prior works \cite{khanna2017scalablegreedyfeatureselection, elenberg2017restrictedstrongconvexityimplies} to set up basis for its correctness and its applicability for different functions. As pointed out in \cite{mirzasoleiman2020coresetsdataefficienttrainingmachine}, one can transform the coreset problem on gradient into the following submodular cover problem with constant $C$:
\begin{equation} \label{eq:crestagain}
    \begin{split}
       S^* &= \operatorname*{argmin}_{S'\subseteq S} |S'| \;\; 
       \text{s.t} \;\; \sum_{j \in S'}  \operatorname*{min}_{i \in S} || \nabla l_i(w_t) -  \nabla l_j(w_t)|| \leq \epsilon
    \end{split}
\end{equation}

The $\gamma$ in origin problem will be calculated as the number of times a specific sample $j \in S'$ is used to achieve minimum distance in the argument right hand side in this transformed problem.  Greedy algorithm is used to calculate the sample being selected in which achieve time complexity $\mathcal{O}(nk)$ in existing work \cite{mirzasoleiman2020coresetsdataefficienttrainingmachine, killamsetty2021gradmatchgradientmatchingbased, pooladzandi2022adaptivesecondordercoresets}, where $n$ is the size of the training set and $k$ is the number of samples being selected.

Despite linear complexity in terms of the sample selected, the calculation of the difference norm of the right hand side is still expansive due to the high dimensional properties of deep learning models. Several works \cite{killamsetty2021glistergeneralizationbaseddata, killamsetty2021gradmatchgradientmatchingbased,mirzasoleiman2020coresetsdataefficienttrainingmachine, pooladzandi2022adaptivesecondordercoresets} demonstrate experimentally and theoretically that one can use the gradient with respect to the last layer for the calculation of gradient difference as it captures the norm of difference properly and greatly speed up the process for practical application, though a recent work has argued against it. Lastly, we complete the formulation with the formulation with fix sample size selected $k$ as following:
\begin{equation}
    \begin{split}
       S^* = \operatorname*{argmax}_{S'\subseteq S} C - \sum_{j \in S'}  \operatorname*{argmin}_{i \in S} &|| \nabla l_i(w_t)^L -  \nabla l_j(w_t)^L|| \;\; \\
       &\text{s.t} \;\; |S'| \leq k
    \end{split}
\end{equation}

\subsection{Details about the algorithm design}
\label{algorithm design details}
In this section, we will briefly discuss the design of the our algorithm. First of all, instead of selecting the entire training set like \cite{mirzasoleiman2020coresetsdataefficienttrainingmachine}, we adapt from \cite{yang2023sustainablelearningcoresetsdataefficient} to select from mini-batch and union the mini-batch to obtain coreset with specified size. This help to reduce the selection time as mentioned in the \cite{yang2023sustainablelearningcoresetsdataefficient}. For the selection, we calculate the expected version as \ref{eq:ourCorsets} to ensure the properties obtained in the theory remain true. We did not perform threshold check listed in the \cite{yang2023sustainablelearningcoresetsdataefficient}, we instead perform update on each epoch.

\clearpage

\subsection{Toy model: Trajectory difference}
\label{sec:trajectoryDiff}
\begin{figure*}[h]
    \centering
    \begin{minipage}{0.75\textwidth} 
        \centering
        \includegraphics[width=\textwidth]{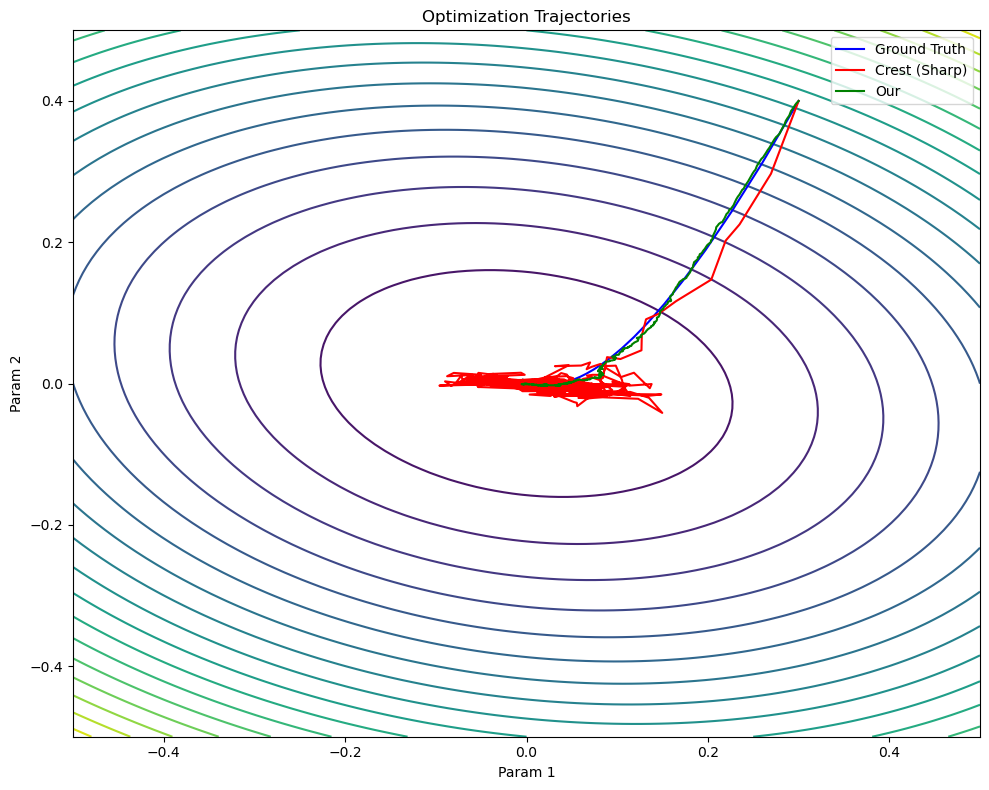}
    \end{minipage} \hfill
    \caption{The trajectory difference between gradient descent, Crest and gradient descent. To mimic the gradient mismatch resulting from the label noise, we inject high frequency function into the Crest and our method. To mimic the smoothed version of the loss landscape, we reduce the magnitude of the gradient in our method. We can find that the injected noise can drastically change the trajectory of different methods while the smoothed version can help better recover the ground truth trajectory. The fluctuation from noise becomes more significant around minima as the gradient around minima becomes smaller under this simulated experiment.}
    \label{fig:trajectory difference}
\end{figure*}
\clearpage

\subsection{More learning scenarios. (Different training budget, corruption level, and architecture.)}
\label{more setting}
In this section, we perform experiments on even more learning condition such as different corruption level, training budget, and architecture. For architecture, We extend our result to Vision transformer to identify whether or not attention based models can work properly with our method and the results show that our method already outperform others under various structures. For other learning setting, we also find our method consistently outperform other benchmark which further justify the robustness of our method.

\begin{table}[htbp]
\centering
\small
\renewcommand{\arraystretch}{1.2} 
\begin{tabular}{|c|c|c|c|c|}
    \hline
    \textbf{Budget} & \textbf{Corrupt Ratio} & \textbf{Our Method} & \textbf{Random} & \textbf{Crest} \\ 
    \hline
    {\textbf{0.2}}  
        & 0.0  & \textbf{0.8969 ± 0.0026} & 0.8968 ± 0.0026 & 0.8083 ± 0.0125 \\ \cline{2-5} 
        & 0.1  & \textbf{0.8795 ± 0.0026} & 0.8788 ± 0.0032 & 0.8331 ± 0.0454 \\ \cline{2-5} 
        & 0.3  & \textbf{0.8495 ± 0.0053} & 0.8483 ± 0.0027 & 0.7632 ± 0.0396 \\ \cline{2-5} 
        & 0.5  & \textbf{0.8022 ± 0.0020} & 0.8011 ± 0.0032 & 0.5308 ± 0.0103 \\ \hline
    {\textbf{0.01}}  
        & 0.0  & \textbf{0.6683 ± 0.0044} & 0.6573±0.0211   & 0.4370 ± 0.0184 \\ \cline{2-5} 
        & 0.1  & \textbf{0.6216 ± 0.0229} & 0.6147 ± 0.0272 & 0.3631 ± 0.0496 \\ \cline{2-5} 
        & 0.3  & \textbf{0.5667 ± 0.0037} & 0.5431 ± 0.0272 & 0.3022 ± 0.0234 \\ \cline{2-5} 
        & 0.5  & \textbf{0.4801 ± 0.0150} & 0.4412 ± 0.0172 & 0.1988 ± 0.0231 \\ \hline
\end{tabular}
\caption{CIFAR-10. Performance comparison of different methods across varying corruption ratios and budget levels. The best-performing method for each setting is highlighted in bold.}
\label{tab:selection_comparison_budget_cr10}
\end{table}

\begin{table}[htbp]
\centering
\small
\renewcommand{\arraystretch}{1.2} 
\begin{tabular}{|c|c|c|c|c|}
    \hline
    \textbf{Budget} & \textbf{Corrupt Ratio} & \textbf{Our Method} & \textbf{Random} & \textbf{Crest} \\ 
    \hline
      {\textbf{0.2}}  
        & 0.0  & \textbf{0.7337 ± 0.0013} & {0.7291 ± 0.0010} & 0.7172 ± 0.0029 \\ \cline{2-5} 
        & 0.1  & \textbf{0.6712 ± 0.0022} & 0.6566 ± 0.0045 & 0.6357 ± 0.0033 \\ \cline{2-5} 
        & 0.3  & \textbf{0.5521 ± 0.0021} & 0.5246 ± 0.0061 & 0.5081 ± 0.0064 \\ \cline{2-5} 
        & 0.5  & \textbf{0.4535 ± 0.0014} & 0.3884 ± 0.0059 & 0.3500 ± 0.0094 \\ \hline
    {\textbf{0.01}}  
        & 0.0  & \textbf{0.2660 ± 0.0079} & 0.2521 ± 0.0039 & 0.1672 ± 0.0126 \\ \cline{2-5} 
        & 0.1  & \textbf{0.2396 ± 0.0029} & 0.2231 ± 0.0033 & 0.1469 ± 0.0039 \\ \cline{2-5} 
        & 0.3  & \textbf{0.1773 ± 0.0050} & 0.1566 ± 0.0025 & 0.1191 ± 0.0038 \\ \cline{2-5} 
        & 0.5  & \textbf{0.1235 ± 0.0026} & 0.1134 ± 0.0051 & 0.0823 ± 0.0021 \\ \hline
\end{tabular}
\caption{CIFAR-100. Performance comparison of different methods with varying corruption ratios and budget levels. The best-performing method for each setting is highlighted in \textbf{bold}.}
\label{tab:selection_comparison_budget_cr100}
\end{table}

\begin{table}[htbp]
    \centering
    \small
    \begin{tabular}{|c|c|c|c|c|}
        \hline
        \textbf{Corrupt Ratio} & \textbf{0.0} & \textbf{0.1} & \textbf{0.3} & \textbf{0.5}  \\ \hline
        \textbf{Our Method} & \textbf{0.8294 ± 0.0011} & \textbf{0.8077 ± 0.0008} & \textbf{0.7734 ± 0.0009} & \textbf{0.7064 ± 0.0026}  \\ \hline
        Crest  & 0.8274 ± 0.0038 & 0.8002 ± 0.0042 & 0.7467 ± 0.0137 & 0.6653 ± 0.0112  \\ \hline
        Random & 0.8200 ± 0.0034 & 0.7980 ± 0.0078 & 0.7479 ± 0.0040 & 0.6806 ± 0.0030 \\ \hline
    \end{tabular}
    \caption{Performance comparison of different selection methods using pretrained ViT-Base on CIFAR-100. Our method consistently outperforms both Crest and random sampling. We train with a learning rate of 0.0003, weight decay of 0.1, and a warm-up scheduling for 20 epochs. The full training process consists of 100 epochs to fit within the rebuttal time constraints.}
    \label{tab:vit_cifar100}
\end{table}

\begin{table}[h]
\centering
\begin{tabular}{lcccc}
\toprule
\textbf{Method} & \textbf{0.0} & \textbf{0.1} & \textbf{0.3} & \textbf{0.5} \\
\midrule
Ours   & 0.8773$\pm$0.0046 & 0.8893$\pm$0.0172 & 0.8787$\pm$0.0061 & 0.8767$\pm$0.0042 \\
Crest  & 0.8707$\pm$0.0163 & 0.8727$\pm$0.0110 & 0.8593$\pm$0.0103 & 0.8727$\pm$0.0253 \\
Random & 0.7520$\pm$0.0040 & 0.7447$\pm$0.0142 & 0.7373$\pm$0.0186 & 0.7400$\pm$0.0243 \\
\bottomrule
\end{tabular}
\caption{additional experiment on the TREC-50 language dataset using ELECTRA-small-discriminator, a compact transformer model with approximately 14 million parameters. Alongside SNLI (used in the main paper), this dataset represents a language inference-style task. We finetune the pretrained model for 50 epochs under a 10 percent data budget, using AdamW with a learning rate of 1e-4, weight decay of 0.01, and standard default settings. We also apply learning rate warm-up for 1 epoch to stabilize training.}
\label{tab:trec and ELECTRA-small-discriminator}
\end{table}

\begin{table}[h]
\centering

\begin{tabular}{lcccc}
\toprule
\textbf{Corruption Ratio} & \textbf{RBFNN\citep{tukan2023provabledatasubsetselection}} & \textbf{Craig} & \textbf{CREST} & \textbf{Ours} \\
\midrule
0.0 & 0.6449$\pm$0.0038 & 0.5665$\pm$0.0053 & 0.6728$\pm$0.0030 & 0.6986$\pm$0.0025 \\
0.1 & 0.5248$\pm$0.0133 & 0.4629$\pm$0.0102 & 0.6391$\pm$0.0040 & 0.6644$\pm$0.0034 \\
0.3 & 0.3422$\pm$0.0140 & 0.2968$\pm$0.0162 & 0.5712$\pm$0.0060 & 0.6085$\pm$0.0044 \\
0.5 & 0.2868$\pm$0.0190 & 0.1603$\pm$0.0048 & 0.4305$\pm$0.0260 & 0.5014$\pm$0.0068 \\
\bottomrule
\end{tabular}
\label{tab:rbfnn cifar100}
\caption{CIFAR-100 test accuracy (mean $\pm$ std) under different corruption ratios. We compare our method with sensitivity based method (RBFNN\citep{tukan2023provabledatasubsetselection}) on CIFAR100 dataset under different corruption level.}
\end{table}

\begin{table}[h]
\centering
\begin{tabular}{lcccc}
\toprule
\textbf{Corruption Ratio} & \textbf{RBFNN\citep{tukan2023provabledatasubsetselection}} & \textbf{Craig} & \textbf{CREST} & \textbf{Ours} \\
\midrule
0.0 & 0.8255$\pm$0.0040 & 0.7618$\pm$0.0080 & 0.8724$\pm$0.0040 & 0.8757$\pm$0.0029 \\
0.1 & 0.7924$\pm$0.0113 & 0.7490$\pm$0.0066 & 0.8440$\pm$0.0030 & 0.8544$\pm$0.0034 \\
0.3 & 0.7280$\pm$0.0118 & 0.7050$\pm$0.0112 & 0.7843$\pm$0.0040 & 0.8120$\pm$0.0058 \\
0.5 & 0.6216$\pm$0.0241 & 0.6320$\pm$0.0186 & 0.6652$\pm$0.0130 & 0.7318$\pm$0.0143 \\
\bottomrule
\end{tabular}
\label{tab:rbfnn cifar10}
\caption{CIFAR-10 test accuracy (mean $\pm$ std) under different corruption ratios. We compare our method with sensitivity based method (RBFNN\citep{tukan2023provabledatasubsetselection}) on CIFAR10 dataset under different corruption level.}
\end{table}


\clearpage

\section{Experiment details}
\label{experiments details}
\subsection{Code base}
We develop our method base on code provided in Crest \citep{yang2023sustainablelearningcoresetsdataefficient}. For Craig and Glister, we use code based from CORD (\url{https:
//github.com/decile-team/cords}) with training hyperparameter changed to our setting.

\subsection{Datasets and architectures} In our work, we conduct experiments on various image datasets. MNIST \citep{deng2012mnist}, EMNIST \citep{cohen2017emnistextensionmnisthandwritten}, CIFAR10, CIFAR100 \citep{Krizhevsky2009LearningML}, and Tinyimagenet \citep{russakovsky2015imagenetlargescalevisual}, SNLI \citep{bowman2015snli} and Imagenet-1k dataset. For MNIST and EMNIST datasets, we use Lenet. For CIFAR10, CIFAR100, Tinyimagenet and  Imagenet-1k, we use respectively ResNet20, ResNet18 and ResNet50. For SNLI, we use pretrain RoBERTa \citep{liu2019roberta} model. To creat data corruption, we pick specified portion of training samples and flip the corresponding label to other classes to ensure the corrupt ratio is rigorous. For all experiments except for Tinyimagenet, we run on single A10 GPU. For Tinyimagenet, we use single NVIDIA A100 GPU.

\subsection{Training hyperparameter}
For all experiments, we fix the peak learning rate at 0.1 and total training epoch to 200. The batch size is set to 128.  For the first 20 epochs, we use linear warm up until learning rate reach 0.1 and decrease the learning rate by factor 0.1 at 120 epoch and 170 epoch. These hyperparameters were consistent with those in \cite{yang2023sustainablelearningcoresetsdataefficient} and were chosen to ensure fair comparisons across methods and datasets.

\subsection{Experiment details about loss landscape and its matching}
\label{loss landscape experiment}
We generate the loss landscape plot using technics in \cite{li2018visualizinglosslandscapeneural}. We purturb the model weights using 
\begin{equation}
    \begin{split}
        f(\alpha, \beta) = l(w^* + \alpha \delta + \beta \eta)
    \end{split}
\end{equation}
In which the $\delta$ and $\eta$ are two randomly initialized vectors with magnitude scaled to models parameters. The plots are generated using 20 by 20 grid and for each grid we calculate the loss on the whole training dataset, Craig subset, and our subset with the same parameter. In the plot, we incorporate our method to the Craig method and use Gaussian noise 0.01. We select all at once as Craig does to verify that the method will bring smoothness to the loss surface and we observe that there exist more sharp corner for the loss surface created by Craig and the loss surface generated by our method is smoother than Craig method. The loss for Craig and our method are scaled loss using the $\gamma$ constant obtain through the greedy selection subroutine. Both plot are generated using 1\% data budget for each selection methods. The 3D plot is in Figure \ref{fig:3d loss landscape}.

\subsection{Evaluation}
We evaluated different methods on various datasets under different corruption ratios, with a fixed data budget of 10\%. We recorded the final test accuracy and measured time as the process wall time (i.e., from the start to the end of the process). To ensure reproducibility, all experiments were conducted using 5 different random seeds, and results were averaged across these runs.

\section{Time complexity analysis}
\label{time complexity analysis}



For Craig, there exist the need for forward propogation for each sample, and the corresponding time complexity is $\mathcal{O}(dn)$. (d:model parameter, n:size of the dataset) The method is to select $q$ fraction ($0 < q < 1$) of the whole dataset with size all at once. The greedy selection strategy for selecting $qn$ samples out n samples is $\mathcal{O}(qn^2)$. Therefore, the complete time complexity is $\mathcal{O}(qn^2 + dn)$ for each epoch.

For Crest, instead of selecting from whole dataset, they select from subset with size M, and the corresponding time complexity for forward propogation is $\mathcal{O}(dR)$. (d:model parameter, n:size of the dataset) The method is to select $m$ point from the subset. The greedy selection strategy for selecting $Rm$ samples out n samples is $\mathcal{O}(Rm)$. Therefore, the time complexity is $\mathcal{O}( P (Rm+ dR))$ where the $P$ is the number of subset that is manually chosen. The overall time complexity for Crest method need to multiple by the times they update the coreset within one epoch for fair comparison, but the average number of update for coreset cannot be calculated as they utilize adaptive strategy which depend on the curveture of the loss landscape.

For our method, we also select from subset with size R but we need multiple forward propogation (M times) and the corresponding time complexity is $\mathcal{O}(MdR)$. We then select $m$ points from the subset and the corresponding time complexity is $\mathcal{O}(mR)$. We need to perform multiple time to have $q$ portion of the whole dataset. The overall time complexity is $\mathcal{O}(q\frac{n}{R} (MdR+mR))$ for one epoch.

For CIFAR-10 dataset, for each epoch, the running time for forward pass to obtain the gradient is  0.136 second for each batch, and the running time for the greedy selection is 0.000612 second for each batch. Hence, a much larger time is spent on the forward pass instead of the greedy selection part of the algorithm. The current state of the art method CREST requires expensive tracking of the Hessian which results in significantly longer training time and memory footprint. As a result, our method remains competitive in terms of the total training time while offering improved selection quality.

\section{Assumptions in the theory}
\label{assmption discussion}
\textbf{Theorem 4.2 relies on assumptions of third-order smoothness and Hessian symmetry of the loss function}

    The assumptions of third-order smoothness and Hessian symmetry are commonly used in deep learning theory to facilitate theoretical analysis. One key observation in deep learning is that the loss landscape often exhibits a degree of continuity, meaning that small changes in the parameter space generally lead to gradual changes in the loss function. This aligns with empirical findings on neural network optimization, where sharp transitions in loss are rare under typical training conditions.
    
    The symmetry of the Hessian follows naturally from the continuity and differentiability of the loss function. There are several important results derived using these assumptions. [A, B, C, D, E, F, G]\citep{Martens2010DeepLV, kiros2013trainingneuralnetworksstochastic, ghorbani2019investigationneuralnetoptimization, kunin2021neuralmechanicssymmetrybroken, barshan2020relatifidentifyingexplanatorytraining, yao2020pyhessianneuralnetworkslens, bottou2018optimizationmethodslargescalemachine} While non-linear activation functions introduce complexities, prior works suggest that, in practice, the loss function remains smooth enough for such assumptions to be reasonable.[C, H] \citep{ghorbani2019investigationneuralnetoptimization, liu2023regularizingdeepneuralnetworks}.
    Additionally, there are lines of research trying to approximate the Hessian using Fisher Information matrix (FIM) such as \citep{pascanu2014revisitingnaturalgradientdeep, liao2018approximatefisherinformationmatrix, sen2024sofimstochasticoptimizationusing}.  This implicitly assume that the Hessian is symmetric as FIM is symmetric according to its definition. Also, there are works\citep{Kirkpatrick_2017, ritter2018scalable} using Hessain as a precision matrix in probabilistic models, which implicitly assume symmetry in its structure and receive success in capturing or improving the behavior of deep learning.
    
    Similarly, the third-order smoothness assumption extends this notion by ensuring that second-order derivatives do not change abruptly, which aligns with empirical observations about the optimization dynamics of deep networks. These smoothness and regularity conditions are standard in optimization theory(\citep{jin2017escapesaddlepointsefficiently, allenzhu2018neon2findinglocalminima, carmon2017convexprovenguiltydimensionfree, criscitiello2021acceleratedfirstordermethodnonconvex, jin2017acceleratedgradientdescentescapes, bottou2018optimizationmethodslargescalemachine}) and are widely used to analyze generalization and convergence properties of deep learning models.
    
    Thus, while these assumptions may not hold universally in all settings (and we are not aware of any assumptions that hold universally for all models), they are reasonable approximations that enable theoretical insights into the learning dynamics of deep neural networks. We hope the reviewer agrees that our results are novel and useful within the context of current understanding of deep neural networks.\\

\section{\textbf{Adacore comparison}}
\label{adacore}
AdaCore is related to our work in terms of its motivation to capture loss curvature. However, we were unable to include it in our experiments due to the lack of an accessible or functional implementation. We explored multiple sources, including the official AdaCore repository (https://github.com/opooladz/AdaCore.git), which has always been empty ever since it was created, as well as public coreset libraries such as CORD (https://github.com/decile-team/cords) and DeepCore (https://github.com/PatrickZH/DeepCore.git). We did not find a working implementation of AdaCore in any of these repositories.

We also attempted to reimplement the method based on the paper, but were unable to reproduce the reported performance. The method requires several manually tuned hyperparameters and includes steps involving Hessian computation, which are both computationally intensive and memory demanding. This introduces a significant runtime and scalability barrier, particularly problematic for the large-scale or noisy settings we focus on, and undermines the motivation for using coresets to speed up training.

Additionally, we note that AdaCore has not been included in recent coreset benchmarks, such as those by \citep{yang2023sustainablelearningcoresetsdataefficient,okanovic2023repeatedrandomsamplingminimizing} which includes the authors of Adacore themselves, where efficiency and scalability are prioritized. We believe this omission reflects a broader consensus that AdaCore, while conceptually interesting, is not competitive in practice under modern resource constraints.

\clearpage
\clearpage

\end{document}